\newtheorem{theorem}{Theorem}[section]
\newtheorem*{theorem*}{Theorem}
\newtheorem{definition}{Definition}[section]
\definecolor{dark-blue}{rgb}{0.15,0.15,0.4}
\definecolor{medium-blue}{rgb}{0,0,0.5}
\icmltitlerunning{Effective Dimensionality Revisited}
\begin{document}

\twocolumn[
\icmltitle{Rethinking Parameter Counting in Deep Models: \\ Effective Dimensionality Revisited}

\icmlsetsymbol{equal}{*}

\begin{icmlauthorlist}
\icmlauthor{Wesley J. Maddox$^*$}{} \quad
\icmlauthor{Gregory Benton$^*$}{} \quad
\icmlauthor{Andrew Gordon Wilson}{} \\
\hspace{-0.97cm}
New York University
\end{icmlauthorlist}

\vskip 0.3in
]

\printAffiliationsAndNotice{\icmlEqualContribution}

\begin{abstract}
Neural networks appear to have mysterious generalization properties when using parameter counting as a proxy for complexity.
Indeed, neural networks often have many more parameters than there are data points, yet still provide good generalization performance.
Moreover, when we measure generalization as a function of parameters, we see \emph{double descent} behaviour, where the test error decreases, increases, and then again decreases.
We show that many of these properties become understandable when viewed through the lens of \emph{effective dimensionality}, which measures the dimensionality of the parameter space determined by the data.
We relate effective dimensionality to posterior contraction in Bayesian deep learning, model selection, width-depth tradeoffs, double descent, and functional diversity in loss surfaces,  leading to a richer understanding of the interplay between parameters and functions in deep models. We also show that effective dimensionality compares favourably to alternative norm- and flatness- based generalization measures.
\end{abstract}

\section{Introduction}

Parameter counting is often used as a proxy for model complexity to reason about generalization \citep[e.g.,][]{zhang2016understanding, shazeer2017outrageously,belkin2019reconciling}, but it can be a poor description of both model flexibility and inductive biases. One can easily construct degenerate cases, such as predictions being generated by a sum of parameters, where the number of parameters is divorced from the statistical properties of the model.
When reasoning about generalization,  \emph{overparametrization} is besides the point: what matters is how the parameters combine with the functional form of the model.

Indeed, the practical success of convolutional neural networks (CNNs) for image recognition tasks is almost entirely about the inductive biases of convolutional filters, depth, and sparsity, for extracting local similarities and hierarchical representations, rather than flexibility
\citep{lecun_lenet_1998,szegedy2015going}.
Convolutional neural networks have far fewer parameters than fully connected networks, yet can provide much better generalization. Moreover, width can provide flexibility, but it is \emph{depth} that has made neural networks distinctive in their generalization abilities.
\begin{figure}[!t]
    \centering
    \includegraphics[width=\linewidth]{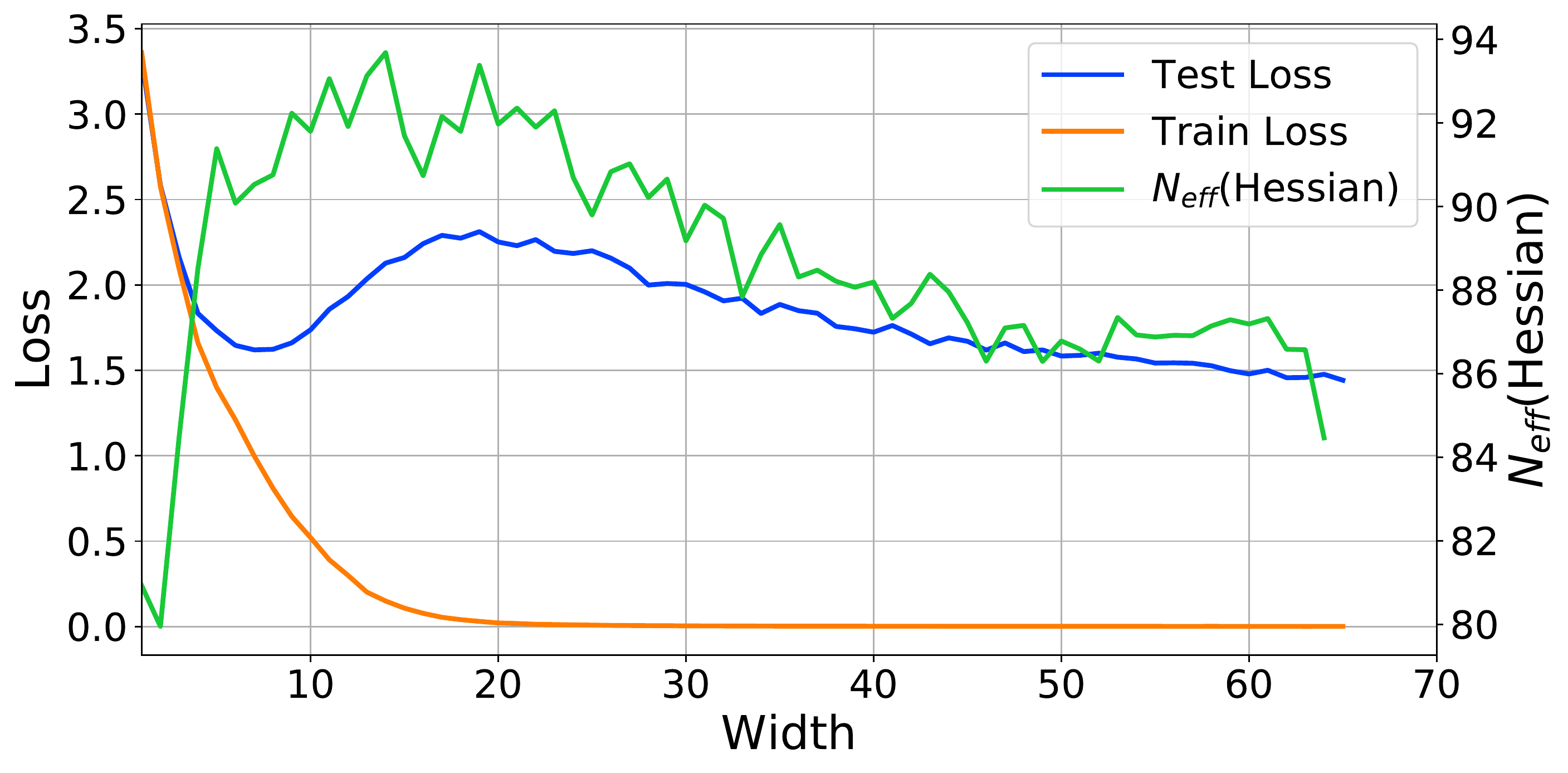}
    \caption{\textbf{A resolution of double descent}. We replicate the \emph{double descent} behaviour of deep neural networks using a ResNet18 \citep{he2016identity} on CIFAR-$100,$ where train loss decreases to zero with sufficiently wide model while test loss decreases, then increases, and then decreases again. Unlike model width, the \emph{effective dimensionality} computed from the eigenspectrum of the Hessian of the loss on \emph{training data alone} follows the test loss in the overparameterized regime, acting as a much better proxy for generalization than naive parameter counting.}
    \label{fig:nn-double-descent-intro}
\end{figure}

\begin{figure*}[!t]
    \centering
    \includegraphics[width=\linewidth]{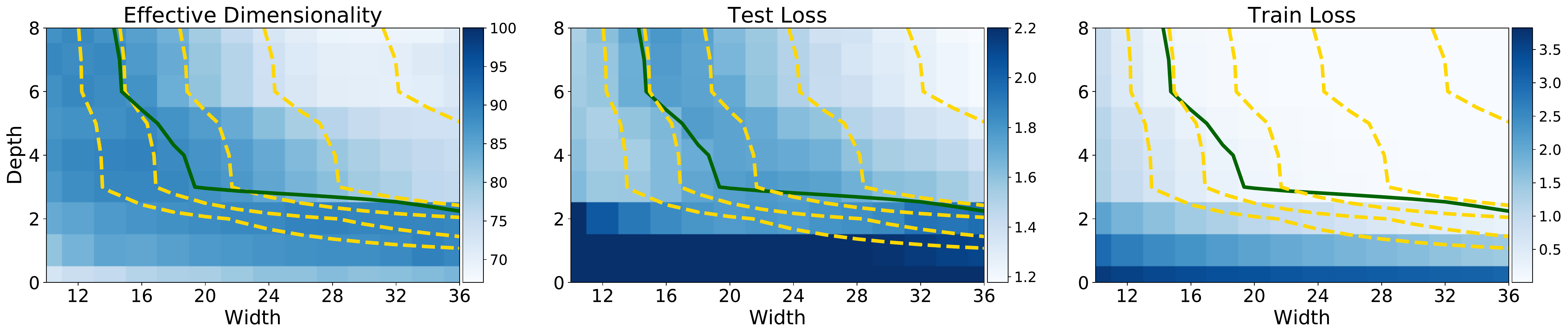}
    \caption{\textbf{Left:} Effective dimensionality as a function of model width and depth for a CNN on CIFAR-$100$. \textbf{Center:} Test loss as a function of model width and depth. \textbf{Right:} Train loss as a function of model width and depth. Yellow level curves represent equal parameter counts ($1e5$, $2e5$, $4e5$, $1.6e6$). The green curve separates models with near-zero training loss. Effective dimensionality serves as a good proxy for generalization for models with low train loss. We see wide but shallow models overfit, providing low train loss, but high test loss and high effective dimensionality. For models with the same train loss, lower effective dimensionality can be viewed as a better compression of the data at the same fidelity. Thus depth provides a mechanism for compression, which leads to better generalization.
    }
    \label{fig:width_depth_exp}
\end{figure*}

In this paper, we argue that we should move beyond simple parameter counting, and show how the generalization properties of neural networks become interpretable through the lens of \emph{effective dimensionality} \citep{mackay1992bayesian}.
Effective dimensionality was originally proposed to measure how many directions in parameter space had been determined in a Bayesian neural network, by computing the eigenspectrum of the Hessian on the training loss (Eq.~\eqref{eqn: eff-dim}, Section \ref{sec: background}).
We provide explicit connections between effective dimensionality,  posterior contraction, model selection, loss surfaces, and generalization behaviour in modern deep learning.

Consider Figure \ref{fig:nn-double-descent-intro}, where we see that once a model has achieved low training loss, the effective
dimensionality, computed from training data alone, closely tracks the mysterious double descent behaviour for neural networks. Indeed, models with increasing width actually have lower effective dimensionality, and better generalization.
Such models can be viewed as providing a better compression of the data, despite having more parameters. 
Alternatively, consider Figure \ref{fig:width_depth_exp}, where we see that width and depth determine effective dimensionality in different ways, though both are related to numbers of parameters.
Remarkably, for models with low training loss (above the green partition), the effective dimensionality closely tracks generalization performance for each combination. We also see that wide but shallow models overfit, while depth helps provide lower effective dimensionality, leading to a better compression of the data.

Specifically, our contributions are as follows:

\begin{itemize}

    \item In Section \ref{sec: post-con-bnn}, we demonstrate that the effective dimensionality of a Bayesian neural network is inversely proportional to the variance of the posterior distribution. As the effective dimensionality increases, so does the dimensionality of parameter space in which the posterior variance has contracted. 

    \item In Section \ref{sec: func-homog-blr}, we prove analytically for linear and generalized linear models with isotropic priors, $k$ parameters, and $n < k$ data observations,  there exist $k-n$ orthogonal directions
    corresponding to eigenvectors of the Hessian of the loss with smallest eigenvalues, where perturbations in parameters lead to almost no difference in the predictions of the corresponding functions on training data. That is, $f(x;\theta) \approx f(x;\widetilde{\theta})$, for training inputs $x$ and perturbed parameters $\widetilde{\theta}$.

    \item In Section \ref{sec: loss-surfaces}, we demonstrate experimentally that there exist \emph{degenerate} directions in parameter space for overparameterized deep neural networks, determined from the eigenvectors of the Hessian, which lead to function-space homogeneity for both training \emph{and} testing inputs. The presence of a high dimensional parameter space in which functional predictions are unchanged suggests that subspace and ensembling methods could be improved through the avoidance of expensive computations within degenerate parameter regimes. This finding also leads to an interpretation of effective dimensionality as \emph{model compression}, since the undetermined directions do not contain additional functional information.

    \item In Section \ref{sec: double-descent}, we show that effective dimensionality provides a compelling mechanism for model selection, resolving generalization behaviour in deep learning that appears mysterious from the perspective of simple parameter counting, such as double-descent and width-depth tradeoffs. Given two models with the same training error, the one with lower effective dimensionality, but not necessarily fewer parameters, should be preferred. 
    
    \item In Section \ref{sec: generalization}, we compare effective dimensionality to the path-norm and PAC-Bayesian flatness generalization measures, showing a greater ability to both track and explain generalization performance when comparing between large modern neural networks.
\end{itemize}

We make code available at \url{https://github.com/g-benton/hessian-eff-dim}.

\section{Posterior Contraction, Effective Dimensionality, and the Hessian}
\label{sec: background}

We consider a model, typically a neural network, $f(x; \theta)$, with inputs $x$ and parameters $\theta \in \mathbb{R}^k$. 
We define the Hessian as the $k \times k$ matrix of second derivatives of the loss,
$\mathcal{H}_\theta = -\nabla \nabla_\theta \mathcal{L}(\theta, \mathcal{D}).$ Often the loss used to train a model by optimization
is taken to be the negative log posterior $\mathcal{L} = - \log p(\theta | \mathcal{D})$.

To begin, we describe posterior contraction, effective dimensionality, and connections to the Hessian.

\subsection{Posterior Contraction}
\label{sec: postcon}

\begin{definition}
We define \textit{posterior contraction} of a set of parameters, $\theta$, as the difference in the  trace of prior and posterior covariance.
\begin{equation}
\Delta_{post}(\theta) = tr(Cov_{p(\theta)}(\theta)) - tr(Cov_{p(\theta | \mathcal{D})}(\theta)),
\label{eqn: postc}
\end{equation}
where $p(\theta)$ is the prior distribution and $p(\theta | \mathcal{D})$ is the posterior distribution given data, $\mathcal{D}$.
\end{definition}
With increases in data the posterior distribution of parameters becomes increasingly concentrated around a single value 
\citep[e.g.,][Chapter 10]{vaart_asymptotic_1998}.
Therefore Eq.~\eqref{eqn: postc} serves to measure the increase in certainty about the parameters under the posterior as compared to the prior.

\subsection{Parameter Space and Function Space}

When combined with the functional form of a model, a distribution over parameters $p(\theta)$ induces a distribution over functions $p(f(x;\theta))$. The parameters
are of little direct interest --- what matters for generalization is the distribution over functions.
Figure \ref{fig: posterior-contraction} provides both \textit{parameter-} and \textit{function-space} viewpoints. As parameter distributions concentrate around specific values, we expect to generate less diverse functions. 

We show in Appendix \ref{app:linear_eff} 
that the posterior contraction for Bayesian linear regression, $y \sim \mathcal{N}(f=\Phi^\top \beta, \sigma^2 I)$, with isotropic Gaussian prior, $\beta \sim \mathcal{N}(0, \alpha^2I_N),$ is given by
\begin{align}
    \Delta_{post}(\theta) = \alpha^2 \sum_{i=1}^N \frac{\lambda_i}{\lambda_i + \alpha^{-2}},
    \label{eq:delta_posterior}
\end{align}
where $\lambda_i$ are the eigenvalues of $\Phi^\top \Phi.$
This quantity is distinct from the posterior contraction in function space (also shown in Appendix \ref{app:linear_eff}).
We refer to the summation in Eq. ~\eqref{eq:delta_posterior} as the \textit{effective dimensionality} of $\Phi^\top \Phi$.

\subsection{Effective Dimensionality}\label{sec: eff-dim}

\begin{definition}
The effective dimensionality of a symmetric matrix $A \in \mathbb{R}^{k \times k}$ is defined as
\begin{equation}
    N_{eff}(A, z) = \sum_{i=1}^{k}\frac{\lambda_i}{\lambda_i + z} \,,
\label{eqn: eff-dim}
\end{equation}
in which $\lambda_i$ are the eigenvalues of $A$ and $z>0$ is a regularization constant \cite{mackay1992bayesian}.
\end{definition}
Typically as neural networks are trained we observe a gap in the eigenspectrum of the Hessian of the loss \citep{sagun2016eigenvalues};
a small number of eigenvalues become large while the rest take on values near $0$.
In this definition of effective dimensionality, eigenvalues much larger than $z$ contribute a value of approximately $1$ to the summation, and eigenvalues much smaller than $z$ contribute a value of approximately $0.$

\subsection{The Hessian and the Posterior Distribution}
\label{sec: blrexample}

We provide a simple example involving posterior contraction, effective dimensionality, and their connections to the Hessian. Figure \ref{fig: posterior-contraction} shows the prior and posterior distribution for a Bayesian linear regression model with a single parameter, with predictions generated by parameters drawn from these distributions.
As expected from Sections \ref{sec: postcon} and \ref{sec: eff-dim}, we see that the variance of the posterior distribution is significantly reduced from that of the prior --- what we refer to here as \textit{posterior contraction}.

We can see from Figure \ref{fig: posterior-contraction} that the arrival of data increases the curvature of the loss (negative log posterior) at the optimum.
This increase in curvature of the loss that accompanies certainty about the parameters leads to an increase in the eigenvalues of the Hessian of the loss in the multivariate case. 
Thus, growth in eigenvalues of the Hessian of the loss corresponds to increased certainty about parameters, leading to the use of the effective dimensionality of the Hessian of the loss as a proxy for the number of parameters that have been determined.\footnote{Empirically described in Appendix \ref{app:hess_training}.}

We often desire models that are both consistent with data, but as simple as possible in function space, embodying Occam's razor and avoiding overfitting. The effective dimensionality explains the number of parameters that have been determined by the data, which corresponds to the number of parameters the model is using to make predictions. Therefore in comparing models of the same parameterization that achieve low loss on the training data, we expect models with \emph{lower} effective dimensionality to generalize better --- which is empirically verified in Figures  \ref{fig:nn-double-descent-intro} and \ref{fig:width_depth_exp}.

\begin{figure}[!t]
    \centering
    \includegraphics[width=\linewidth]{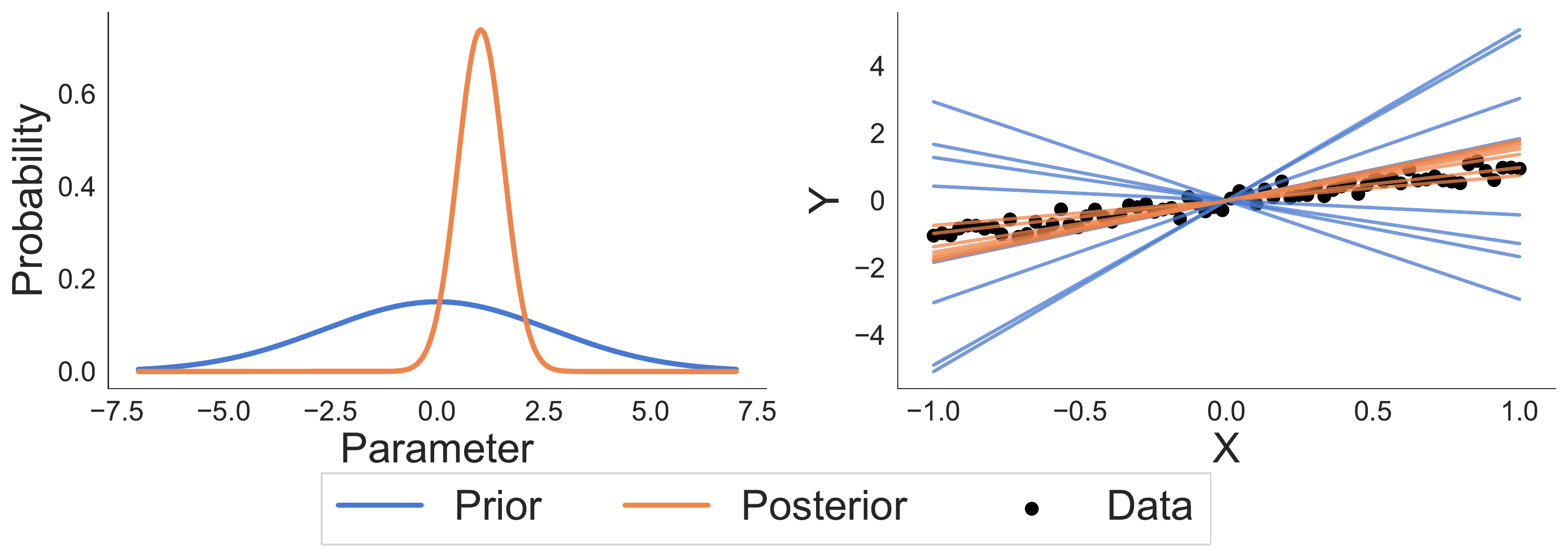}
    \caption{\textbf{Left:} A comparison of prior and posterior distributions in a Bayesian linear regression setting, demonstrating the decrease in variance referred to as \textit{posterior contraction}. \textbf{Right:} Functions sampled from the prior and posterior distributions, along with the training data.}
    \label{fig: posterior-contraction}
\end{figure}

We can further connect the Hessian and the posterior distribution by considering a Laplace approximation as in \citet{mackay1992bayesian,mackay1992interpolation}.
Here we assume that the distribution of parameters $\theta$ is multivariate normal around the maximum a posteriori (MAP) estimate, $\theta_{\text{MAP}} = \text{argmax}_{\theta}p(\theta | \mathcal{D})$,
and the Hessian of the negative log posterior, $\mathcal{H_{\theta}} + A$,\footnote{$A = -\nabla\nabla_\theta \log p(\theta)$ is the Hessian of the log prior.} serves as the precision matrix. The approximating distribution is then $\mathcal{N}(\theta_{\text{MAP}}, (\mathcal{H}_\theta+A)^{-1})$. The intuition built using Figure \ref{fig: posterior-contraction} carries through to this approximation: as the eigenvalues of the Hessian increase, the eigenvalues of the covariance matrix in our approximation to the posterior distribution shrink, further indicating contraction around the MAP estimate. 
We demonstrate this property algebraically in Appendix \ref{app:effective_dimensionality_extensions}, where we also connect the effective dimensionality to the bias-variance tradeoff  \citep{dobriban2018} and to the Hilbert space norm \citep{rasmussen_gaussian_2008}.

\subsection{Practical Computations And Parameterizations}

For deep and wide neural networks the Hessian of the loss is large, and thus computing the eigenvalues and eigenvectors is nontrivial. We employ an efficient implementation of the Lanczos method for determining eigenvalues provided in GPyTorch, allowing for the rapid computation of approximate eigenvalue, eigenvector pairs \citep{gardner2018gpytorch}. In practice, we estimate effective dimensionality by selectively computing the leading eigenvalues, since many of the eigenvalues are 
typically close to zero and do not significantly contribute to the estimate.

In general, the Hessian of the loss (and its effective dimension) is not invariant to re-parameterizations (e.g. ReLU rescaling and batch normalization) \citep[Chapter 27]{mackay2003information}. For this reason we assume a fixed parameterization, as is typically the case in practice, and compare only between models of the same 
parameterization.

\section{Related Work}

\citet{cleveland1979robust} introduced effective dimensionality into the splines literature as a measure of goodness of fit, while \citet[][Chapter 3]{hastie1990generalized} used it to assess generalized additive models.
\citet{gull1989developments} first applied effective dimensionality in a Bayesian setting for an image reconstruction task, while \citet{mackay1992bayesian,mackay1992interpolation} used it to compute posterior contraction in Bayesian neural networks. 	
\citet{moody1992effective} argued for the usage of the effective dimensionality as a proxy for generalization error, while \citet{moody1991note} suggested that effective dimensionality could be used for neural network architecture selection.
\citet{zhang2005learning} and \citet{caponnetto_optimal_2007} studied the generalization abilities of kernel methods in terms of the effective dimensionality.
		
\citet[][Chapter 7]{friedman2001elements} use the effective dimensionality (calling it the effective degrees of freedom) to compute the expected generalization gap for regularized linear models. \citet{dobriban2018} specifically tied the bias variance decomposition of predictive risk in ridge regression (e.g.\ the finite sample predictive risk under Gaussian priors) to the effective dimensionality of the feature matrix, $\Phi^\top \Phi.$
\citet{hastie_surprises_2019}, \citet{muthukumar2019harmless}, \citet{bartlett2019benign}, \citet{mei2019generalization}, and \citet{belkin2019two} studied risk and generalization in over-parameterized linear models, including under model misspecification. \citet{bartlett2019benign} also introduced the concept of effective rank of the feature matrix, which has a similar interpretation to effective dimensionality.
		
\citet{sagun2016eigenvalues} found that the eigenvalues of the Hessian increase through training, while \citet{papyan2018full} and \citet{ghorbani2019investigation} studied the eigenvalues of the Hessian for a range of modern neural networks. \citet{suzuki2018generalization} produced generalization bounds on neural networks via the effective dimensionality of the covariance of the functions at each hidden layer. 
\citet{fukumizu2019semi} embedded narrow neural networks into wider neural networks and studied the flatness of the resulting minima in terms of their Hessian via a PAC-Bayesian approach.
\citet{achille2018emergence} argue that flat minima have low information content (many small magnitude eigenvalues of the Hessian) by connecting PAC-Bayesian approaches to information theoretic arguments, before demonstrating that low information functions learn invariant representations of the data.
\citet{dziugaite2017computing} optimize a PAC-Bayesian bound to both encourage flatness and to compute non-vacuous generalization bounds, while \citet{jiang2019fantastic} recently found that PAC-Bayesian measures of flatness, in the sense of insensitivity to random perturbations, perform well relative to other generalization bounds. 
\citet{zhou2018non} used PAC-Bayesian compression arguments to construct non-vacuous generalization bounds at the ImageNet scale.

Moreover, \citet{mackay2003information} and \citet{smith2017bayesian} provide an Occam factor perspective linking flatness and generalization. Related minimum description length perspectives can be found in \citet{mackay2003information} and \citet{hinton1993keeping}.
Other works also link flatness and generalization \citep[e.g.,][]{hochreiter_flat_1997, keskar2016large, chaudhari2019entropy, izmailov2018averaging}, with \citet{izmailov2018averaging}
and \citet{chaudhari2019entropy} developing optimization procedures to select for flat regions of the loss.

To the best of our knowledge,  \citet{opper1989basins}, \citet{opper1990ability}, \citet{bos1993generalization}, and \citet{le1991eigenvalues} introduced the idea that generalization error for neural networks can decrease, increase, and then again decrease with increases in parameters (e.g.\ the double descent curve) while \citet{belkin2019reconciling} re-introduced the idea into the modern machine learning community by demonstrating its existence on a wide variety of machine learning problems.
\citet{nakkiran2019deep} found generalization gains as neural networks become highly overparameterized, showing the \emph{double descent} phenomenon that occurs as the width parameter of both residual and convolutional neural networks is increased.

\section{Posterior Contraction and Function-Space Homogeneity in Bayesian Models}\label{sec: posterior-contraction}
In this section, we demonstrate that effective dimensionality of both the posterior parameter covariance and the Hessian of the loss provides insights into how a model adapts to data during training. We derive an analytic relationship between effective dimensionality and posterior contraction for models where inference is exact, and demonstrate this relationship experimentally for deep neural networks.

\subsection{Posterior Contraction of Bayesian Linear Models}

\begin{theorem}[Posterior Contraction in Bayesian Linear Models]\label{thm: post-contraction}
Let $\Phi = \Phi(x) \in \mathbb{R}^{n \times k}$ be a feature map of $n$ data observations, $x$, with $n < k$ and assign isotropic prior $\beta \sim \mathcal{N}(0_k, \alpha^2I_k)$ for parameters $\beta \in \mathbb{R}^k$. Assuming a model of the form $y \sim \mathcal{N}(\Phi \beta, \sigma^2 I_{n})$ the posterior distribution of $\beta$ has a $k-n$ directional subspace in which the variance is identical to the prior variance.
\end{theorem}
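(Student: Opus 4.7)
The plan is to exploit the conjugate structure of the Bayesian linear regression model together with the rank deficiency of $\Phi^\top \Phi$ in the overparametrized regime $n < k$.

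First, I would write down the posterior. Because both the prior $\mathcal{N}(0_k, \alpha^2 I_k)$ and the likelihood $\mathcal{N}(\Phi\beta, \sigma^2 I_n)$ are Gaussian, the posterior $p(\beta \mid y)$ is Gaussian with covariance
\[
\Sigma_{\text{post}} = \left(\tfrac{1}{\alpha^2} I_k + \tfrac{1}{\sigma^2} \Phi^\top \Phi\right)^{-1}.
\]
This is a standard computation that the paper invokes elsewhere, so I would cite it rather than rederive it.

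Second, I would locate a large subspace on which $\Phi^\top \Phi$ acts trivially. Since $\Phi \in \mathbb{R}^{n \times k}$ with $n < k$, we have $\mathrm{rank}(\Phi^\top \Phi) = \mathrm{rank}(\Phi) \le n$, so $\ker(\Phi^\top \Phi) = \ker(\Phi)$ has dimension at least $k - n$. Call this kernel $V$ and pick an orthonormal basis for it.

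Third, I would show $V$ is an invariant subspace of $\Sigma_{\text{post}}$ on which the covariance reduces to $\alpha^2 I$. For any $v \in V$,
\[
\Sigma_{\text{post}}^{-1} v = \tfrac{1}{\alpha^2} v + \tfrac{1}{\sigma^2} \Phi^\top \Phi v = \tfrac{1}{\alpha^2} v,
\]
so $v$ is an eigenvector of $\Sigma_{\text{post}}$ with eigenvalue $\alpha^2$. Consequently, for any unit vector $v \in V$, the posterior variance of the one-dimensional projection $v^\top \beta$ is $v^\top \Sigma_{\text{post}} v = \alpha^2$, exactly matching the prior variance $v^\top (\alpha^2 I_k) v = \alpha^2$. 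Since $\dim V \ge k - n$, this gives the desired $(k-n)$-dimensional subspace.

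There is no serious obstacle here; the argument is essentially a rank count combined with the Woodbury-style posterior formula. The one subtlety worth stating carefully is the meaning of ``variance identical to prior variance'' in a subspace: I would phrase it as \emph{the posterior and prior covariance operators agree when restricted to $V$}, equivalently $\Sigma_{\text{post}} v = \alpha^2 v$ for all $v \in V$, which covers both the marginal variance of any coordinate-like projection and the full covariance structure within $V$.
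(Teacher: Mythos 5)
Your proposal is correct and takes essentially the same route as the paper: both start from the Gaussian posterior covariance $\left(\Phi^\top\Phi/\sigma^2 + \alpha^{-2}I_k\right)^{-1}$ and exploit the fact that $\Phi^\top\Phi$ has rank at most $n$, so its null eigenspace of dimension at least $k-n$ becomes an eigenspace of the posterior covariance with eigenvalue $\alpha^2$. The paper writes out the full eigendecomposition while you work directly with the kernel of $\Phi$, but the argument is the same.
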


We prove Theorem \ref{thm: post-contraction} in Appendix \ref{app: post-contraction}, in addition to an equivalent result for generalized linear models.
Theorem \ref{thm: post-contraction} demonstrates why \textit{parameter counting} often makes little sense: for a fixed data set of size $n$, only $\min(n, k)$ parameters can be determined, leaving many dimensions in which the posterior is unchanged from the prior when $k \gg n$. 

\paragraph{Empirical Demonstration for Theorem \ref{thm: post-contraction}.} 
We construct $\Phi(x)$ with each row as an instance of a $200$ dimensional feature vector consisting of sinusoidal terms for each of $500$ observations: $\Phi(x) = [\cos(\pi x), \sin(\pi x), \cos(2\pi x), \sin(2\pi x),\dots]$.
We assign the coefficient vector $\beta$ a prior $\beta \sim \mathcal{N}(0, I)$, and draw ground truth parameters $\beta^*$ from this distribution. The model takes the form $\beta \sim \mathcal{N}(0, I)$ and $y \sim \mathcal{N}\left(\Phi \beta, \sigma^2 I\right)$.

We randomly add data points one at a time, tracking the posterior covariance matrix at each step. We compute the effective dimensionality, $N_{eff}\left(\Sigma_{\beta | \mathcal{D},  \sigma}, \alpha\right)$, where $\Sigma_{\beta | \mathcal{D},  \sigma}$ is the posterior covariance of $\beta$.\footnote{Here we use $\alpha = 5$, however the results remain qualitatively the same as this parameter changes.} 

In Figure \ref{fig: bayes-eff-dim} we see that the effective dimensionality of the posterior covariance decreases linearly with an increase in available data until the model becomes overparameterized, at which point the effective dimensionality of the posterior covariance of the parameters slowly approaches $0$, while the effective dimensionality of the Hessian of the loss increases towards an asymptotic limit.
As the parameters become more determined (e.g. the effective dimensionality of the posterior covariance decreases), the curvature of the loss increases (the effective of the Hessian increases).
In the Bayesian linear model setting, the Hessian of the loss is the inverse covariance matrix and the trade-off between  the effective dimensionality of the Hessian and the parameter covariance can be determined algebraically (see Appendix \ref{app:neff_proof}). 

\begin{figure}[!t]
    \centering
    \includegraphics[width=\linewidth]{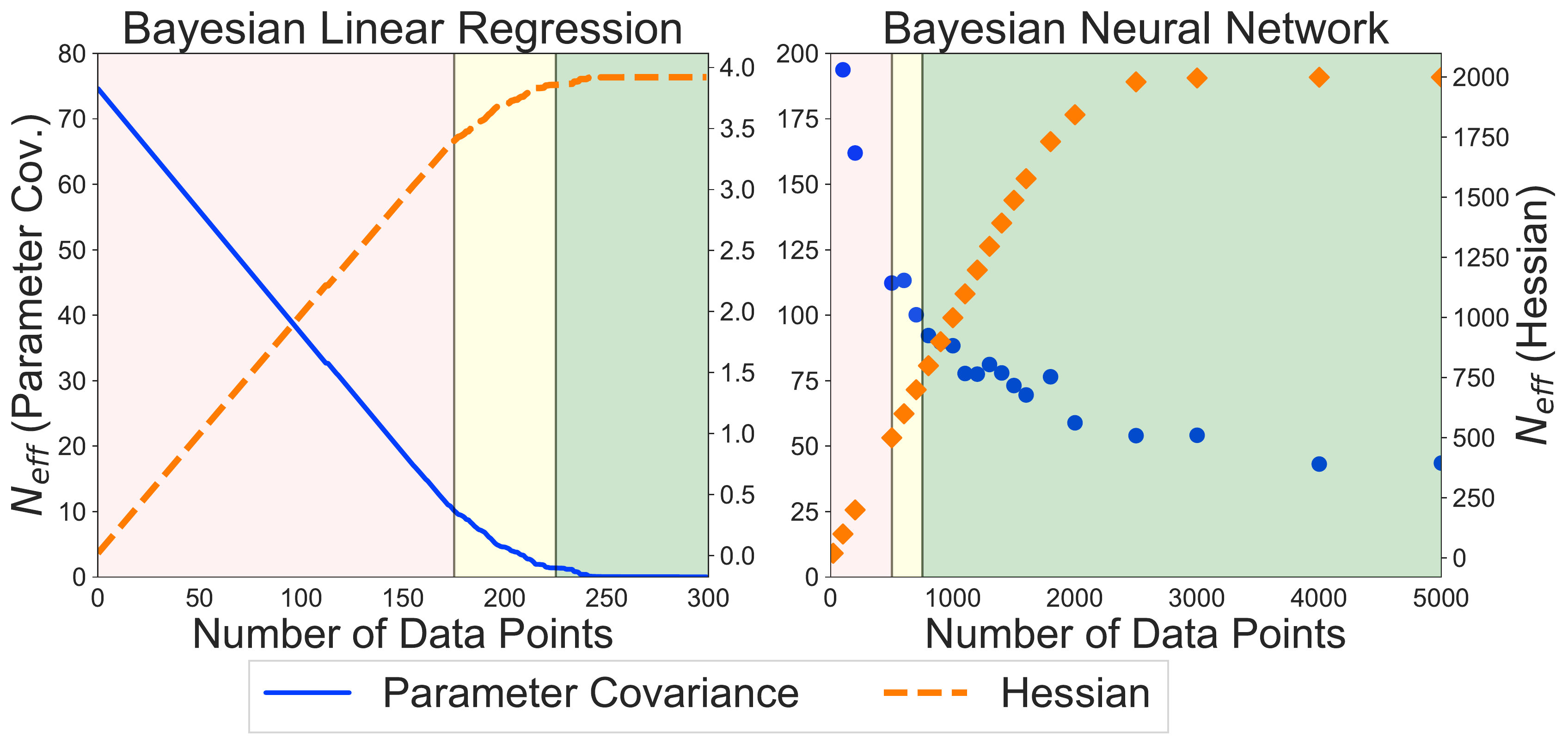}
    \caption{\textbf{Left:} Bayesian linear regression. \textbf{Right:} Bayesian neural network. \textbf{Both:} The effective dimensionality of the posterior covariance over parameters and the function-space posterior covariance. Red indicates the under-parameterized setting, yellow the critical regime with $p \approx n$, and green the over-parameterized regime. In both models we see the expected increase in effective dimensionality in parameter space and decrease in effective dimensionality of the Hessian.}
    \label{fig: bayes-eff-dim}
\end{figure}

\subsection{Posterior Contraction of Bayesian Neural Networks}
\label{sec: post-con-bnn}

While much effort has been spent grappling with the challenges of marginalizing a high dimensional parameter space for Bayesian neural networks, the practical existence of subspaces where the posterior variance has not collapsed from the prior suggests that both computational and approximation gains can be made from ignoring directions in which the posterior variance is unchanged from the prior. 
This observation helps explain the success of subspace based techniques that examine the loss in a lower dimensional space such as \citet{izmailov2019subspace}. Alternatively, by working directly in function space, as in \citet{sun2019functional}, the redundancy of many parameters could be avoided.

For Bayesian linear models, the effective dimensionality of the parameter covariance is the inverse of the Hessian, and as the effective dimensionality of the parameter covariance decreases the effective dimensionality of the Hessian increases.
We hypothesize that a similar statement holds for Bayesian neural networks --- as the number of data points grows, the effective dimensionality of the posterior covariance should decrease while the effective dimensionality of the Hessian should increase.

To test this hypothesis, we generate a nonlinear function of the form, $y = w_1 x + w_2 x^2 + w_3 x^3 + (0.5 + x^2)^2 + \text{sin}(4 x^2) + \epsilon,$ with $w_i \sim \mathcal{N}(0,I)$ and $\epsilon \sim \mathcal{N}(0, 0.05^2),$ and de-mean and standardize the inputs.\footnote{From the Bayesian neural network example in NumPyro \citep{phan2019composable,bingham2019pyro}: \url{https://github.com/pyro-ppl/numpyro/blob/master/examples/bnn.py}.}
We then construct a Bayesian neural network with two hidden layers each with $20$ units, no biases, and $tanh$ activations, placing independent Gaussian priors with variance $1$ on all model parameters. We then run the No-U-Turn sampler \citep{hoffman2014no} for $2000$ burn-in iterations before saving the final $2000$ samples from the approximated posterior distribution.
Using these samples, we compute the effective dimensionality of the sample posterior covariance, $\text{Cov}_{p(\theta | \mathcal{D})}(\theta)$, and Hessian of the loss at the MAP estimate in Figure \ref{fig: bayes-eff-dim}.
The trends of effective dimensionality for Bayesian neural networks are aligned with Bayesian linear regression, with the effective dimensionality of the Hessian (corresponding to function space) increasing while the effective dimensionality of the parameter space decreases.

\subsection{Function-Space Homogeneity}
\label{sec: func-homog-blr}

In order to understand how the function-space representation varies as parameters are changed in directions \emph{undetermined} by the data, we first consider Bayesian linear models.

\begin{theorem}[Function-Space Homogeneity in Linear Models]\label{thm: function-homog}
Let $\Phi = \Phi(x) \in \mathbb{R}^{n \times k}$ be a feature map of $n$ data observations, $x$, with $n < k$, and assign isotropic prior $\beta \sim \mathcal{N}(0_k, \alpha^2I_k)$ for parameters $\beta \in \mathbb{R}^k$.
The minimal eigenvectors of the Hessian define a $k-n$ dimensional subspace in which parameters can be perturbed without changing the training predictions in function space.
\end{theorem}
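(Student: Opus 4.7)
My plan is to write down the negative log posterior (the loss) explicitly, read off its Hessian, and then exploit the fact that the null space of $\Phi^{\top}\Phi$ coincides with the null space of $\Phi$. Concretely, with $\mathcal{L}(\beta) = \tfrac{1}{2\sigma^2}\lVert y - \Phi\beta\rVert^{2} + \tfrac{1}{2\alpha^{2}}\lVert\beta\rVert^{2}$ plus constants, the Hessian is
\[
\mathcal{H}_{\beta} \;=\; \tfrac{1}{\sigma^{2}}\Phi^{\top}\Phi \;+\; \tfrac{1}{\alpha^{2}} I_{k},
\]
so $\mathcal{H}_{\beta}$ and $\Phi^{\top}\Phi$ share the same eigenbasis, with eigenvalues of the Hessian given by $\sigma^{-2}\lambda_{i} + \alpha^{-2}$.

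Next I would use the rank bound $\mathrm{rank}(\Phi^{\top}\Phi) \le \mathrm{rank}(\Phi) \le n < k$ to conclude that $\Phi^{\top}\Phi$ has at least $k-n$ zero eigenvalues. Let $V_{0}$ be the $(k-n)$-dimensional eigenspace of $\Phi^{\top}\Phi$ associated with eigenvalue $0$; these are exactly the eigenvectors of $\mathcal{H}_{\beta}$ attaining the minimal eigenvalue $\alpha^{-2}$. For any $v \in V_{0}$, the identity $\lVert\Phi v\rVert^{2} = v^{\top}\Phi^{\top}\Phi v = 0$ yields $\Phi v = 0$, so $V_{0}$ sits inside the null space of $\Phi$ (and by dimension counting actually equals it).

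Finally, perturbing the parameters by any $v \in V_{0}$ gives $\Phi(\beta + v) = \Phi\beta + \Phi v = \Phi\beta$, so the training predictions are exactly invariant; in particular $f(x;\beta) = f(x;\beta+v)$ for all training inputs $x$. This is the claimed $(k-n)$-dimensional subspace spanned by the minimal eigenvectors of $\mathcal{H}_{\beta}$.

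Honestly there is not much of an obstacle here: once the Hessian is written down, everything reduces to the elementary fact $\ker(\Phi^{\top}\Phi) = \ker(\Phi)$ together with a dimension count. The only mild subtlety is that the prior curvature $\alpha^{-2} I_{k}$ shifts \emph{all} eigenvalues uniformly, so that identifying the "minimal" eigenvectors of $\mathcal{H}_{\beta}$ with the kernel of $\Phi$ still requires a strict rank deficiency (which we have, since $n<k$); this is worth flagging explicitly. The generalized linear model extension alluded to in the theorem statement would proceed analogously after replacing $\Phi^{\top}\Phi$ with $\Phi^{\top} W \Phi$ for a positive diagonal weighting $W$ coming from the link function, whose null space still equals that of $\Phi$.
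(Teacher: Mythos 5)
Your proof is correct and follows essentially the same route as the paper's: identify the Hessian as $\sigma^{-2}\Phi^{\top}\Phi + \alpha^{-2}I_k$, note that the rank deficiency of $\Phi^{\top}\Phi$ gives a $(k-n)$-dimensional minimal eigenspace with eigenvalue $\alpha^{-2}$, and use $\ker(\Phi^{\top}\Phi)=\ker(\Phi)$ to conclude $\Phi(\beta+v)=\Phi\beta$. Your explicit quadratic-form argument for $\Phi v = 0$ and the remark about the uniform $\alpha^{-2}$ shift are slightly more careful than the paper's appeal to ``properties of Gram matrices,'' but the substance is identical.
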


We prove Theorem \ref{thm: function-homog} and its extension to generalized linear models in Appendix \ref{app: function-homog}. This theorem suggests that although there may be large regions in parameter-space that lead to low-loss models, many of these models may be homogeneous in function space.

\begin{figure}[!t]
    \centering
    \includegraphics[width=\linewidth]{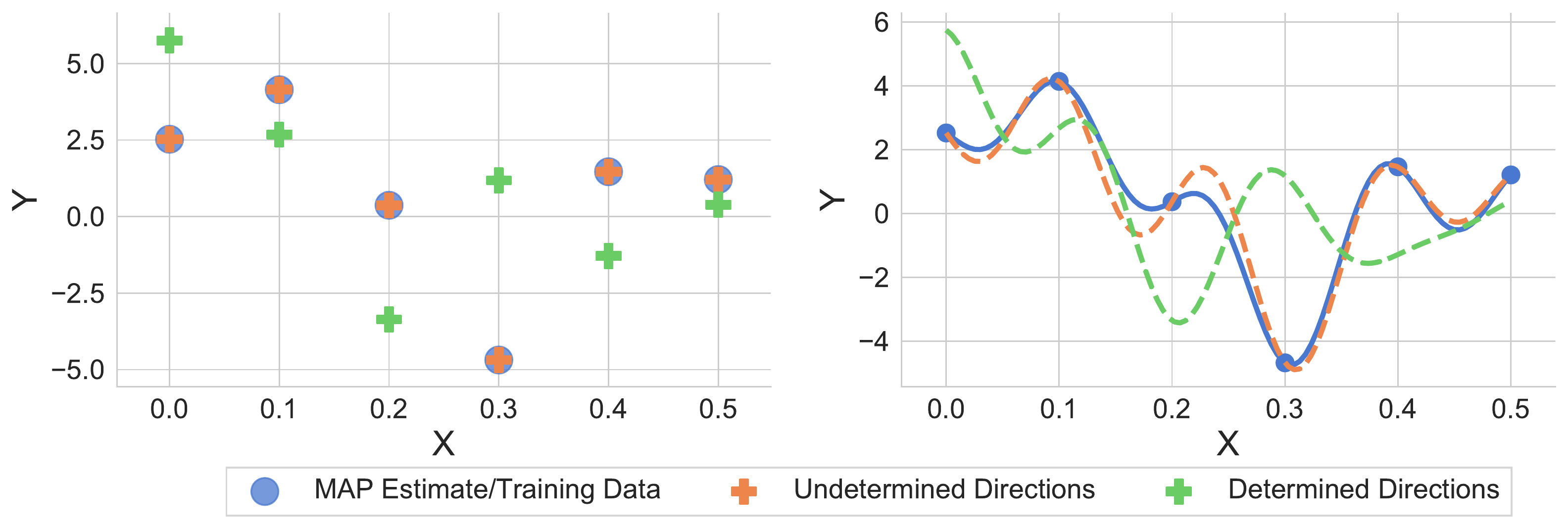}
    \caption{\textbf{Left:} The predictions on training data for a simple Bayesian linear regression model with sinusoidal features for various parameter settings. \textbf{Right:} The predictions over the entire test domain. \textbf{Both:} Blue represents the MAP estimate as well as the training points, orange represents the model after the parameters have been perturbed in a direction in which the posterior has not contracted, and green represents the model after parameters have been perturbed in a direction in which the posterior has contracted. Perturbing parameters in directions that have not been determined by the data gives not only identical predictions on training data, but the functions produced on the test set are nearly the same.}
    \label{fig: blr-func-homog}
    \vspace{-0.3cm}
\end{figure}

We can interpret Theorem \ref{thm: function-homog} in terms of the eigenvectors of the Hessian indicating which directions in parameter space have and have not been determined by the data. The dominant eigenvectors of the Hessian (those with the largest eigenvalues) correspond to the directions in which the parameters have been determined from the data and the posterior has contracted significantly from the prior. The minimal eigenvectors (those with the smallest eigenvalues) correspond to the directions in parameter space in which the data has not determined the parameters. 

Figure \ref{fig: blr-func-homog} demonstrates the result of Theorem \ref{thm: function-homog} for a Bayesian linear model with sinusoidal features. We compare predictions made using the MAP estimate of the parameters, $\theta^* = \text{argmax}_{\theta}p(\theta | \mathcal{D})$, to predictions generated using perturbed parameters.
As parameters are perturbed in directions that have not been determined by the data (minimal eigenvectors of the Hessian), the predictions on
both train and test remain nearly identical to those generated using the MAP estimate.
Perturbations in determined directions (dominant eigenvectors of the Hessian) yield models that perform poorly on the training data and significantly deviate from the MAP estimate on the test set.

\section{Loss Surfaces and Function Space Representations}
\label{sec: loss-surfaces}

Recent works have discussed the desirability of finding solutions corresponding to \emph{flat} optima in the loss surface, arguing that such parameter settings lead to better generalization \citep{izmailov2018averaging, keskar2016large}.
There are multiple notions of flatness in loss surfaces, relating to both the volume of the basin in which the solution resides and the rate of increase in loss as one moves away from the found solution. As both definitions correspond to low curvature in the loss surface, it is standard to use the Hessian of the loss to examine structure in the loss surface \cite{madras2019detecting, keskar2016large}.

The effective dimensionality of the Hessian of the loss indicates the number of parameters that have been determined by the data. In highly over-parameterized models we hypothesize that the effective dimensionality is substantially less than the number of parameters, i.e. $N_{eff}(\mathcal{H}_{\theta}, \alpha) \ll p,$ since we should be unable to determine many more parameters than we have data observations.

Recall from Section \ref{sec: eff-dim} the large eigenvalues of the Hessian have eigenvectors corresponding to directions in which parameters are determined. Eq.~\eqref{eqn: eff-dim} dictates that low effective dimensionality (in comparison to the total number of parameters) would imply that there are many directions in which parameters are not determined, and the Hessian has eigenvalues that are near zero, meaning that in many directions the loss surface is constant. We refer to directions in parameter space that have not been determined as \emph{degenerate} for two reasons: (1) degenerate directions in parameter space provide minimal structure in the loss surface, shown in Section \ref{sec: hessian-loss-surface}; (2) parameter perturbations in degenerate directions do not provide diversity in the function-space representation of the model, shown in Section \ref{sec: func-homog-nn}.
We refer to the directions in which parameters have been determined, directions of high curvature, as \textit{determined}.

To empirically test our hypotheses regarding degenerate directions in loss surfaces and function space diversity, we train a neural network classifier on $1000$ points generated from the two-dimensional Swiss roll data, with a similar setup to \citet{huang2019understanding}, using Adam with a learning rate of $0.01$ \cite{kingma2014adam}. The network is fully connected, consisting of $5$ hidden layers each $20$ units wide (plus a bias term), and uses ELU activations with a total of $2181$ parameters. We choose a small model with two-dimensional inputs so that we can both tractably compute all the eigenvectors and eigenvalues of the Hessian and visualize the functional form of the model. To demonstrate the breadth of these results, we provide comparable visualizations in the Appendix \ref{app: cifar}, but for a convolutional network trained on CIFAR-$10$.

\subsection{Loss Surfaces as Determined by the Hessian}
\label{sec: hessian-loss-surface}
To examine the loss surface more closely, we visualize low dimensional projections. To create the visualizations, we first define a basis given by a set of vectors, then choose a two random vectors, $u$ and $\widetilde{v}$, within the span of the basis. We use Gram-Schmidt to orthogonalize $\widetilde{v}$ with respect to $u$, ultimately giving $u$ and $v$ with $u \perp v$. We then compute the loss at parameter settings $\theta$ on a grid surrounding the optimal parameter set, $\theta^*$, which are given by
\begin{equation}
    \theta \leftarrow \theta^* + \alpha u + \beta v
\end{equation}
for various $\alpha$ and $\beta$ values such that all points on the grid are evaluated.

By selecting the basis in which $u$ and $v$ are defined we can specifically examine the loss in determined and degenerate directions. Figure \ref{fig: loss-surfaces} shows that in determined directions, the optimum appears extremely sharp. Conversely, in all but the most determined directions, the loss surface loses all structure and appears constant.
Even in degenerate directions, if we deviate from the optimum far enough the loss will eventually become large. However to observe this increase in loss requires perturbations to the parameters that are significantly larger in norm than $\theta^*$.

\begin{figure}[!t]
    \centering
    \includegraphics[width=\linewidth]{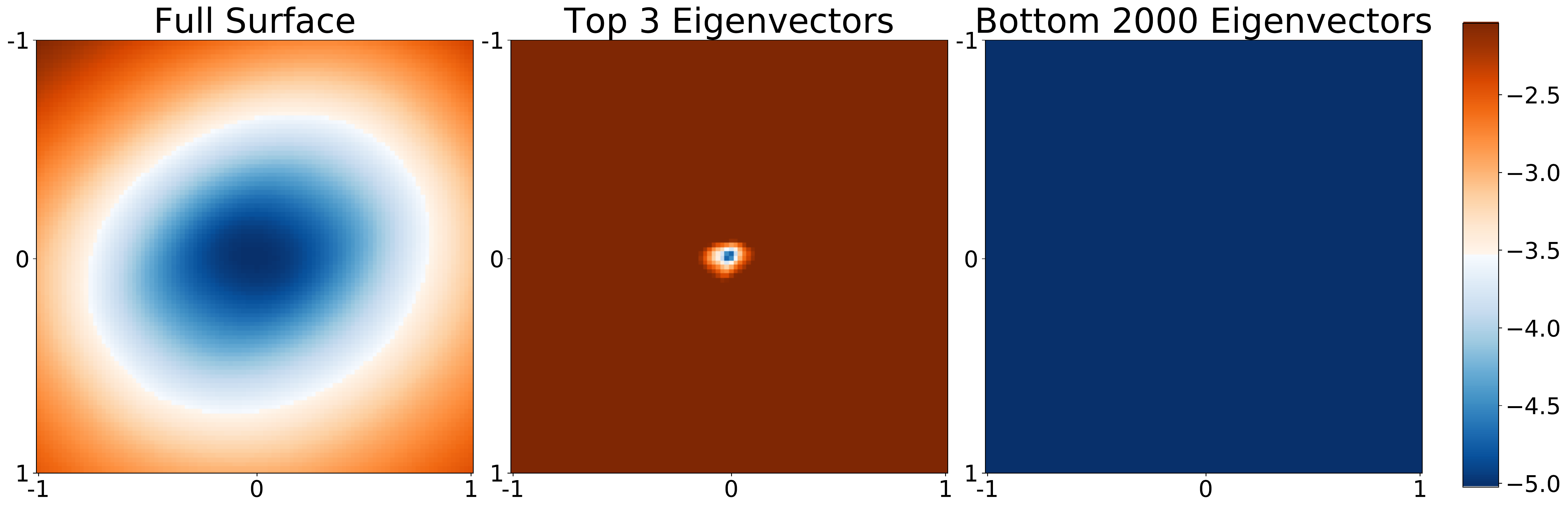}
    \caption{\textbf{Left:} A random projection of the loss surface. \textbf{Center:} A projection of the loss surface in the top $3$ directions in which parameters have been determined. \textbf{Right:} A projection of the loss surface in the $2000$ (out of $2181$) directions in which parameters have been determined the least. The rightmost plot shows that in degenerate parameter directions the loss is constant.}
    \label{fig: loss-surfaces}
\end{figure}

\subsection{Degenerate Parameters Lead to Homogeneous Models}
\label{sec: func-homog-nn}
In this section we show that degenerate parameter directions do not contain diverse models. This result is not at odds with the notion that flat regions in the loss surface can lead to diverse but high performing models. Rather, we find that there is a subspace in which the loss is constant and one cannot find model diversity, noting that this subspace is distinct from those employed by works such as \citet{izmailov2019subspace} and \citet{huang2019understanding}. This finding leads to an interpretation of effective dimensionality as \emph{model compression}, since the undetermined directions do not contain additional functional information.

We wish to examine the functional form of models obtained by perturbing the parameters found through training, $\theta^*$. Perturbed parameters are computed as
\begin{equation}
    \mathbf{\theta} \leftarrow \mathbf{\theta}^* +  s \frac{B v}{||B v||_{2}}
\end{equation}
where $B \in \mathbb{R}^{k \times d}$ is a $d$ dimensional basis in which we wish to perturb $\theta^*$, and $v \sim \mathcal{N}(0, I_d)$, giving $Bv$ as a random vector from within the span of some specified basis (i.e. the dominant or minimal eigenvectors). The value $s$ is chosen to determine the scale of the perturbation, i.e. the length of the random vector by which the parameters are perturbed.

Experimentally, we find that in a region near the optimal parameters $\theta^*$, i.e. $s \leq ||\theta^*||_{2}/2$  the function-space diversity of the model is contained within the subspace of determined directions. While the degenerate directions contain wide ranges of parameter settings with low loss, the models are equivalent in function space.

Figure \ref{fig: perturbed-classifier} shows the trained classifier and the differences in function-space between the trained classifier and those generated from parameter perturbations. We compare perturbations of size $||\theta^*||_{2}/2 \approx 10$ in the direction of the $500$ minimal eigenvectors and perturbations of size $0.1$ in the directions of the $3$ maximum eigenvectors. A perturbation from the trained parameters in the directions of low curvature (center plot in Figure \ref{fig: perturbed-classifier}) still leads to a classifier that labels all points identically. A perturbation roughly $100$ times smaller the size in directions in which parameters have been determined leads to a substantial change in the decision boundary of the classifier.

However, the change in the decision boundary resulting from perturbations in determined directions is not necessarily desirable. One need not perturb parameters in either determined or degenerate directions to perform a downstream task such as ensembling. Here, we are showcasing the  degeneracy of the subspace of parameter directions that have not been determined by the data. This result highlights that despite having many parameters the network could be described by a relatively low dimensional subspace.

\begin{figure}[!t]
    \centering
    \includegraphics[width=\linewidth]{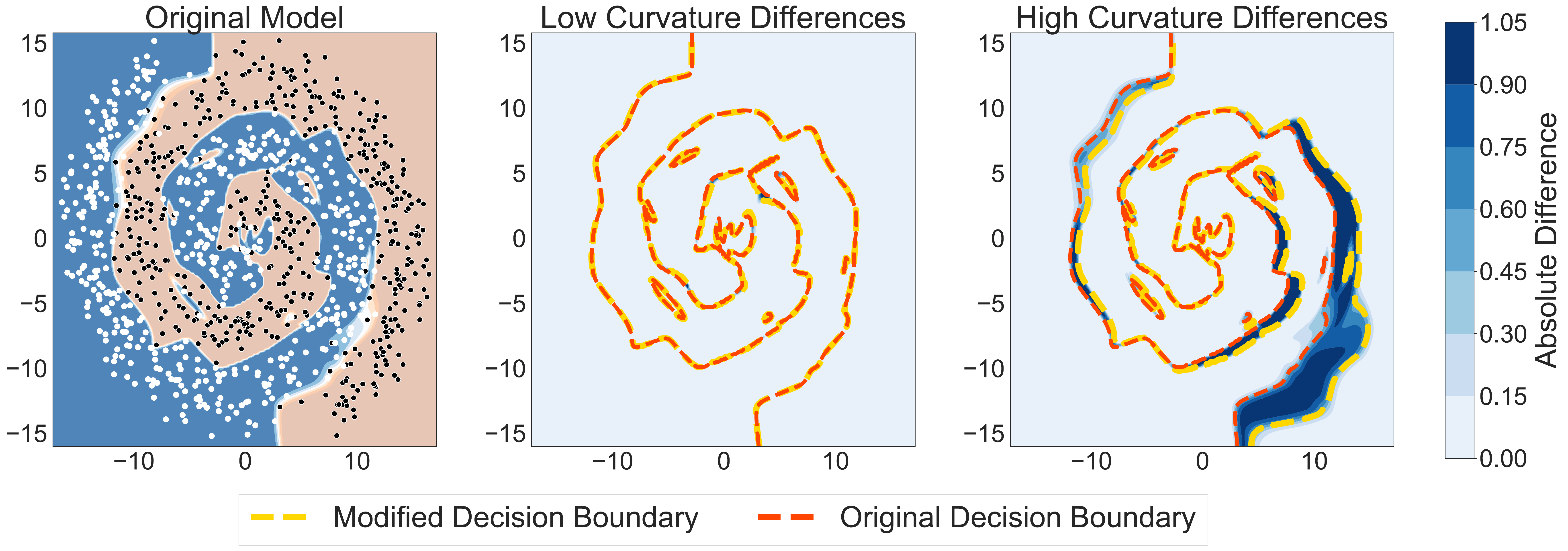}
    \caption{Swiss roll data. \textbf{Left:} Adam trained feed-forward, fully connected classifier. \textbf{Center:} Differences in original and perturbed classifier when parameters are perturbed by in low curvature, degenerate directions.
    \textbf{Right:} Differences in the original and perturbed classifier when parameters are perturbed in high curvature directions. \textbf{Note} the perturbation in the center plot is approximately $100$ times the size of that of the plot on the right.}
    \label{fig: perturbed-classifier}
\end{figure}

\subsection{Effective Dimensionality as Compression}
\label{sec: compression}

In Section \ref{sec: func-homog-nn} we showed that effective dimensionality relates to the number of parameter directions in which the functional form of the model is sensitive to perturbations, and that in the low curvature directions given by the Hessian eigenvectors with smallest eigenvalues the model outputs are largely unchanged by perturbations to the parameters. 
The presence of these degenerate directions suggests that we can disregard high dimensional subspaces that contain little information about the model, for compression into a smaller subspace containing only the most important parameter directions given by the eigenvalues of the Hessian. This observation helps explain the practical success of such subspace approaches 
\citep{izmailov2019subspace, li2018measuring}.

We can also understand the compression of the data provided by a model in terms of minimum description length, by examining the \emph{Occam factor} \citep[Chapter~28]{mackay2003information}.
For model $\mathcal{M}$ with parameters $\theta$, we find the Occam factor in decomposing the evidence as,
\begin{equation}\label{eqn: occam}
\begin{aligned}
    p(\mathcal{D} | \mathcal{M}_i) &\approx \underbrace{p(\mathcal{D} | \theta_{MP}, \mathcal{M})} \times 
                                 \underbrace{p(\theta_{MP} | \mathcal{M})\textrm{det}^{-\frac{1}{2}}(\mathcal{H}_\theta/2\pi)}, \\
    \textrm{Evidence} &\approx \quad \textrm{Likelihood}\;\;\times \qquad \textrm{Occam Factor}
\end{aligned}
\end{equation}
in which $\mathcal{H}_{\theta}$ is the Hessian of the loss, and $\theta_{MP}$ is the \emph{maximum a posteriori} estimate of the parameters.

As the eigenvalues of the Hessian decay and the effective dimensionality decreases, the determinant of the Hessian also decreases, causing the Occam factor to increase, and the description length to decrease, providing a better compression.
\citet{mackay1992interpolation} and \citet{mackay2003information} contains a further discussion of the connection between Occam factors and minimum description length.
These connections also help further explain the practical success of optimization procedures that select for flat regions of the loss \citep{izmailov2018averaging, keskar2016large}.

\section{Double Descent and Effective Dimensionality}
\label{sec: double-descent}
The principle of Occam's razor suggests we want a model that is as simple as possible while still fitting the training data. By the same token we ought to desire a model with low effective dimensionality. Low effective dimensionality indicates that the model is making full use of a smaller number of parameters, providing a better compression of the data, and thus likely better generalization.
The phenomenon of \emph{double descent} of the generalization performance in both linear and deep models has attracted recent attention \citep{nakkiran2019deep,belkin2019reconciling}; here, we explain double descent by effective dimensionality.

We find that for models in which the training loss converges to near zero, the effective dimensionality corresponds remarkably well to generalization performance, despite having been determined only from training data. 
For models that are only just able to achieve zero training error, but generalize poorly due to overfitting, the effective dimension is high. In these cases high effective dimensionality is due the sensitivity of the fit to the precise settings of the parameters.
As the model changes and grows, there exist a greater variety of subspaces which provide more effective compressions of the data, and thus we achieve a lower effective dimensionality.
We demonstrate the correspondence of effective dimensionality to generalization performance in the regime with near-zero training loss for both linear models and deep neural networks. 

\emph{In short, double descent is an artifact of overfitting. As the dimensionality of the parameter space continues to increase past the point where the corresponding models achieve zero training error, flat regions of the loss occupy a greatly increasing volume \citep{huang2019understanding}, and are thus more easily discoverable by optimization procedures such as SGD. These solutions have lower effective dimensionality, and thus provide better lossless compressions of the data, as in Section~\ref{sec: compression}, and therefore better generalization. In concurrent work, \citet{wilson2020bayesian} show that exhaustive Bayesian model averaging over multiple modes eliminates double descent.}

\subsection{Double Descent on Linear Models}
Although double descent is often associated with neural networks, we here demonstrate similar behaviour with a linear model with a varying number of features: first drawing $200$ data points $y \sim \mathcal{N}(0, 1)$ and then drawing $20$ informative features $y + \epsilon,$ where $\epsilon \sim \mathcal{N}(0, 1),$ before drawing $k - 20$ features that are also just random Gaussian noise, where $k$ is the total number of features in the model.\footnote{From \url{https://github.com/ORIE4741/demos/blob/master/double-descent.ipynb}.} For the test set, we repeat the generative process. In Figure \ref{fig:weak-double-descent} we show a pronounced double descent curve in the test error as we increase the number of features, which is mirrored by the effective dimensionality.

\begin{figure}[!t]
    \centering
    \includegraphics[width=\linewidth]{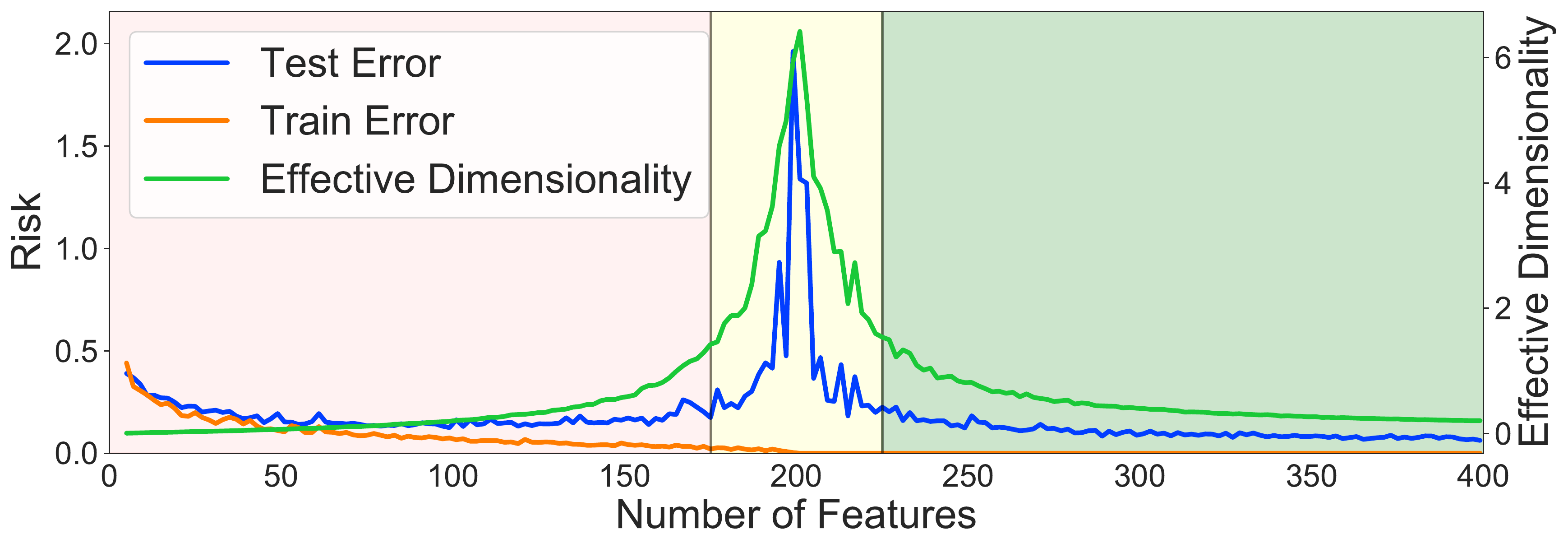}
    \caption{Demonstration of double descent for linear models with an increasing number of features. We plot the effective dimensionality of the Hessian of the loss. In the regime with near-zero train error, the test error is almost entirely explained by the effective dimensionality, which is computed on the train set alone. The red region corresponds to underparameterized models, yellow to critically parameterized models, and green to overparameterized models.}
    \label{fig:weak-double-descent}
\end{figure}

\subsection{Double Descent for Deep Models}

We demonstrate and explain double descent as a function of the effective dimensionality for deep neural networks.
\citet{nakkiran2019deep} demonstrated that double descent can also occur for modern deep neural networks including transformers, CNNs, and ResNets.
Following their experimental setup, we train ResNet18s \citep{he2016identity} with varying width parameters, reproducing the double descent curve shown by \citet{nakkiran2019deep}.\footnote{See Appendix \ref{app:dnn_exp_details} for training details.}
We compute the effective dimensionality of the model using $100$ eigenvalues as calculated from a GPU accelerated Lanczos in GPyTorch \citep{gardner2018gpytorch}. 
In Figure \ref{fig:nn-double-descent-intro} we see effective dimensionality tracks remarkably well with generalization --- displaying the double descent curve that is seen in the test error. We emphasize again that the effective dimensionality is computed using solely the training data, supporting the hypothesis that the eigenvalues of the Hessian matrix can provide a good proxy for generalization performance.
In Appendix \ref{app: classifiers}, we test small neural networks on a problem for which we can compute all of the eigenvalues replicating a similar finding to Figure \ref{fig:nn-double-descent-intro} as width increases.

In addition to test loss, we also demonstrate that effective dimensionality tracks double descent in \emph{test error} in Figure~\ref{fig:dd_test_error} (Appendix). Double descent is clearly present for test loss on CIFAR-100, but not test error. As we discussed at the beginning of Section~\ref{sec: double-descent}, double descent is an artifact of overfitting. To produce double descent for test error, we follow the setup in \citet{nakkiran2019deep} and introduce $20\%$ label corruption, which increases the chance of overfitting. We also show additional results for double descent in Figures~\ref{fig: ddnolabel} and \ref{fig: ddlabel}, where we compare with other generalization measures that are described in Section~\ref{sec: generalization}.

\subsection{Networks of Varying Width and Depth}

Double descent experiments typically only consider increases in width. However, it is \emph{depth} which has endowed neural networks 
with distinctive generalization properties.
In Figure \ref{fig:width_depth_exp}, we consider varying both the width and depth of a 
convolutional neural network on the CIFAR-$100$ dataset. We measure effective dimensionality, training loss, and testing loss. The yellow curves show networks with a constant number of parameters, indicating the simple parameter counting is not a good proxy for generalization. However, in the region of near-zero training loss, separated by the green curve, we see effective dimensionality closely matches generalization performance. Moreover, wide but shallow models tend to overfit, providing low training loss, but high effective dimensionality and test loss. On the other hand, deeper models have lower test loss and lower effective dimensionality, showing that depth enables a better compression of the data.

\section{Effective Dimensionality as a Generalization Measure}
\label{sec: generalization}

We have shown, for the first time, that a generalization measure is able to track and explain double descent and width-depth trade-offs in modern deep networks. Here we compare effective dimensionality with alternative norm- and flatness-based measures. We choose path-norm and a PAC-Bayesian based sharpness measure, due to their good performance on other problems in prior work \citep{jiang2019fantastic, keskar2016large, neyshabur2017exploring}. We compute the norms as in \citet{jiang2019fantastic} for a neural network function $f(x; \theta)$ with input $x$ and weights $\theta$. 

The path-norm is the square root of the sum of the outputs produced by a forwards pass on an input of all ones,
\begin{equation}
    \mu_{\textrm{path-norm}}(f) = \left(\sum f(\mathbf{1}; \theta^2) \right)^{1/2} \,.
\end{equation}
with the parameters $\theta$ squared (Eq. 44 of \citet{jiang2019fantastic}).
Through squaring the weights and taking the square root of the output, we form a correspondence between the path-norm and the $\ell_2$ norm of all paths within a network from an input node to an output node \citep{neyshabur2017exploring}.

The PAC-Bayesian flatness measure of \citet{jiang2019fantastic}, adapted from \citet{dziugaite2017computing} and \citet{keskar2016large}, is perturbation-based and computed as 
\begin{equation}\label{eqn: pacbayes}
   \mu_{\textrm{pac-bayes-sharpness}}(f) = \frac{1}{\sigma^2}  \,,
\end{equation}
where $\sigma$ is the largest value such that
\begin{equation}\label{eqn: pacbayes-expec}
    \mathbb{E}_{u \sim \mathcal{N}(0, \sigma^2 I)} \left[ \mathcal{L}(\theta + u, \mathcal{D})\right] \leq \mathcal{L}(\theta, \mathcal{D}) + 0.1 \,.
\end{equation}
In Equation \ref{eqn: pacbayes-expec}, $\mathcal{L}(\theta, \mathcal{D})$ is the prediction \emph{error} on the training dataset of the network with weights $\theta$ as computed on data set $\mathcal{D}$. This measure corresponds to a bound on parameter perturbations such that increases in training error remain beneath $0.1$ in expectation as in \citet{jiang2019fantastic}.

We additionally compare to the magnitude aware PAC-Bayes bound,
\begin{equation}\label{eqn: mag-pacbayes}
   \mu_{\textrm{mag-pac-bayes-sharpness}}(f) = \frac{1}{\sigma'^2} \,,
\end{equation}
where $\sigma'$ is the largest value such that
\begin{equation}\label{eqn: mag-pacbayes-expec}
    \mathbb{E}_{u \sim \mathcal{N}(0, \sigma'^2 |\theta| + \epsilon)}  \left[ \mathcal{L}(\theta + u, \mathcal{D})\right] \leq  \mathcal{L}(\theta, \mathcal{D}) + 0.1 \,.
\end{equation}
In Equation \ref{eqn: mag-pacbayes-expec} the variance of the perturbation to each parameter is scaled according to the magnitude of that parameter, adding stability by accounting for differences in scales, and, implicitly, the size of the perturbation with the dimension of the parameter space. The value of $\epsilon$ is taken to be $0.001$ as in \citet{jiang2019fantastic} and serves to regularize the distribution, preventing the distribution from collapsing in the presence of weights that are close to $0$.

We extend the results of Figures \ref{fig:nn-double-descent-intro} and \ref{fig:width_depth_exp} in Figure \ref{fig: generalization}, for double descent and wide-depth trade-offs, to include the path-norm and PAC-Bayesian flatness measures. We display test loss, test error, and generalization measures, standardized by subtracting the sample mean and dividing by the sample standard deviation. We additionally show correlations with generalization in Tables \ref{tab: gen-correlation-dd} and \ref{tab: gen-correlation-wd}.

\begin{table}
\centering
\begin{tabular}{ c|c|c|c|} 
  & Test loss & Test Error & Gen. Gap\\ 
  \hline
  $N_{eff}$(Hessian) & $\mathbf{0.9434}$ & $0.9188$ &  $\mathbf{0.9429}$\\
  \hline
   PAC-Bayes & $-0.8443$ & $-0.7372$ & $-0.8597$\\ 
 \hline
 Mag. PAC-Bayes & $0.7066$ & $0.8270$ & $0.6805$ \\ 
 \hline
 Path-Norm & $0.5598$ & $0.7216$ & $0.5259$ \\ 
 \hline
 Log Path-Norm & $0.9397$ & $\mathbf{0.9846}$ & $0.9257$ \\ 
 \hline

\end{tabular}
\caption{Sample Pearson correlation with generalization on double descent for ResNet$18$s of varying width on CIFAR-$100$ with a training loss below $0.1$.
}
\label{tab: gen-correlation-dd}
\end{table}

\begin{table}
\centering
\begin{tabular}{ c|c|c|c|} 
  & Test loss & Test Error & Gen. Gap\\ 
  \hline
  $N_{eff}$(Hessian) & $0.9305$ & $\mathbf{0.9461}$ &  $0.9060$\\
  \hline
   PAC-Bayes & $-0.8619$ & $-0.7916$ & $-0.8873$\\ 
 \hline
 Mag. PAC-Bayes & $0.8724$ & $0.9225$ & $0.8330$ \\ 
 \hline
 Path-Norm & $0.7996$ & $0.7721$ & $0.7511$ \\ 
 \hline
 Log Path-Norm & $\mathbf{0.9781}$ & $0.9402$ & $\mathbf{0.9602}$ \\ 
 \hline

\end{tabular}
\caption{Sample Pearson correlation with generalization for CNNs of varying width and depth on CIFAR-$100$ with a training loss below $0.1$.}
\label{tab: gen-correlation-wd}
\end{table}

\begin{figure*}%
    \centering
    \begin{subfigure}{1.0\columnwidth}
            \includegraphics[width=\columnwidth]{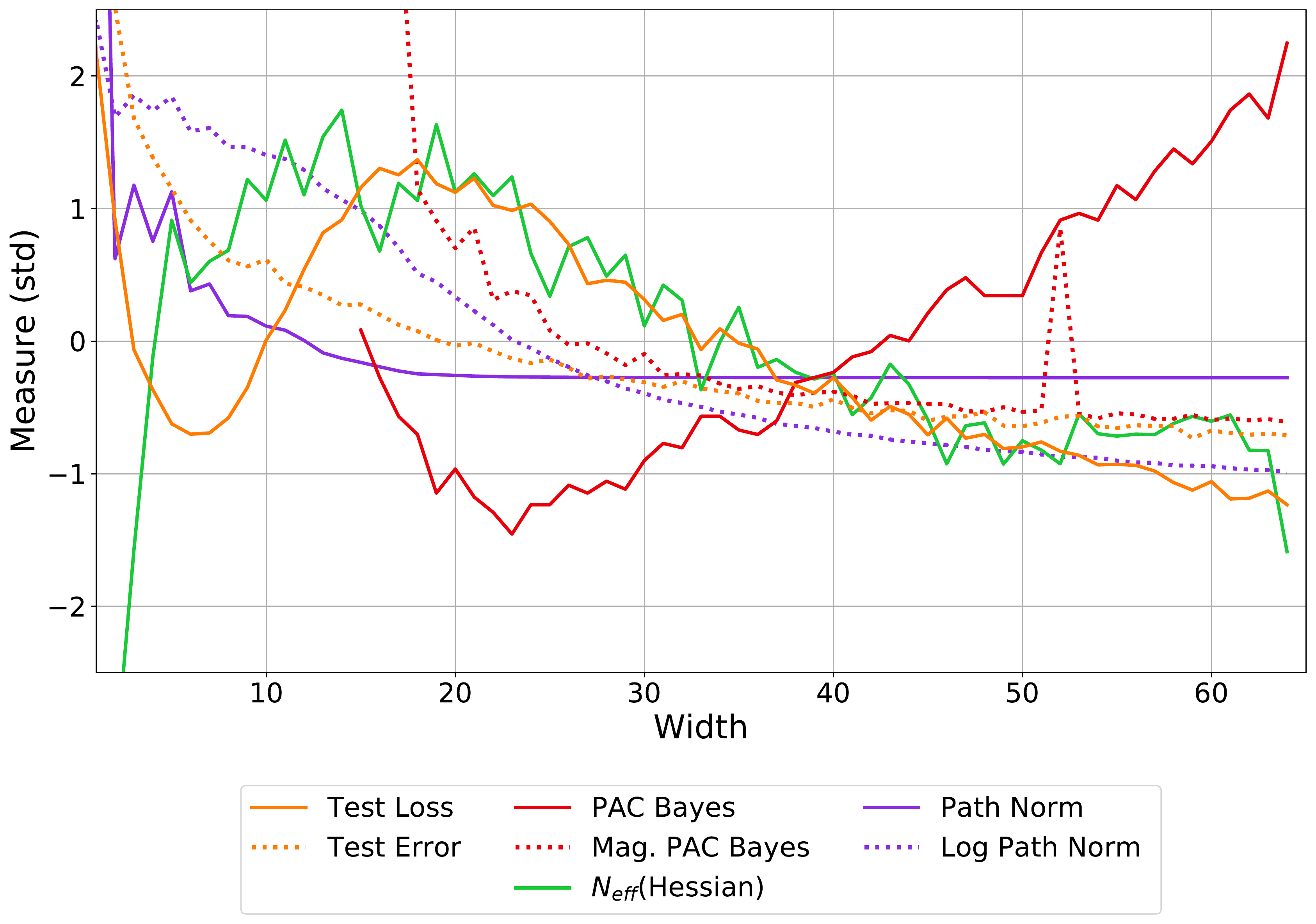}%
        \caption{\textit{Double Descent, No Label Noise}}
        \label{fig: ddnolabel}
    
    \end{subfigure}\hfill%
    \begin{subfigure}{1.0\columnwidth}
        \includegraphics[width=\columnwidth]{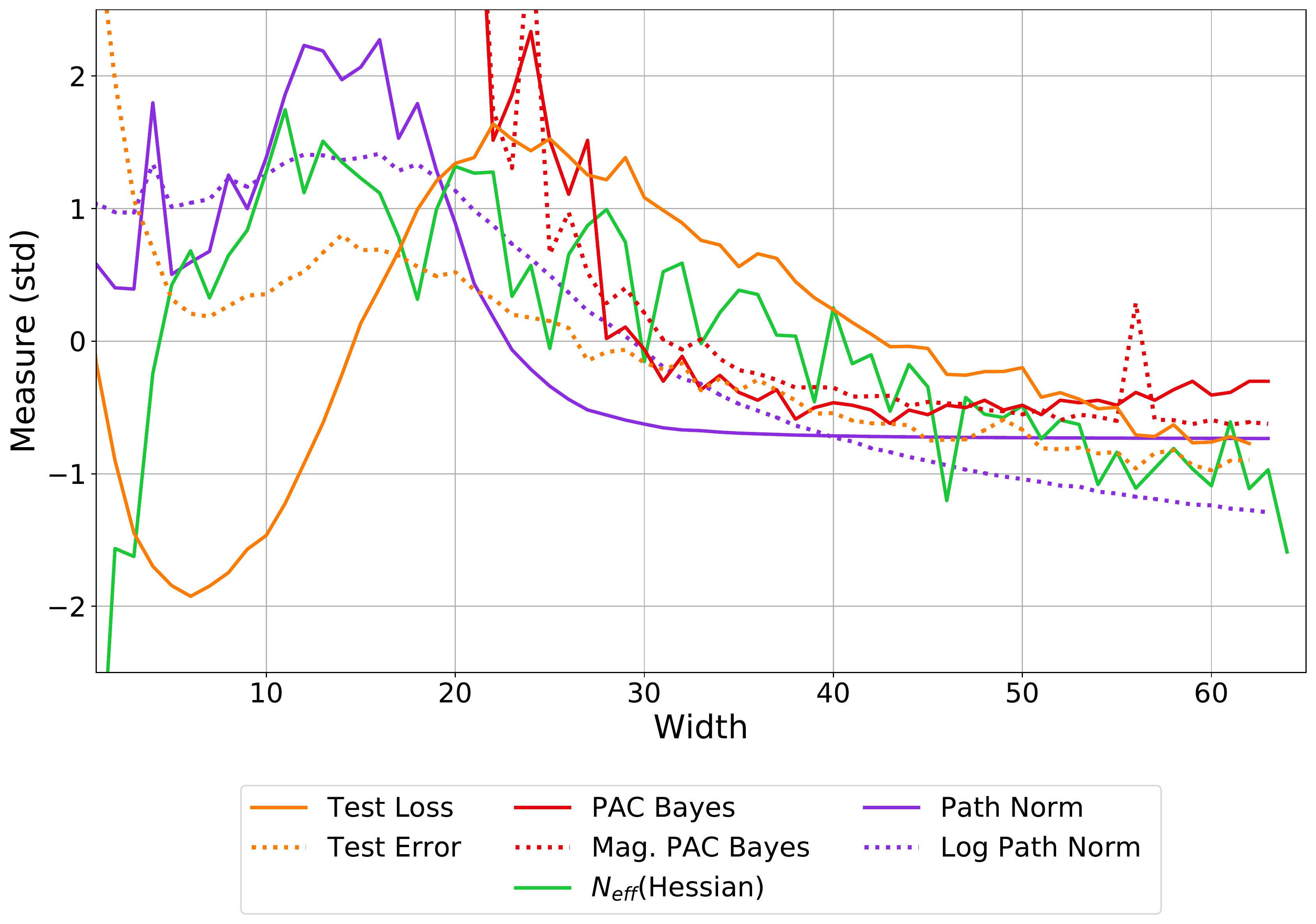}
        \caption{\textit{Double Descent, 20\% Label Noise}}
        \label{fig: ddlabel}
    \end{subfigure}\\[1ex]
    \begin{subfigure}{\columnwidth}
        \includegraphics[width=\columnwidth]{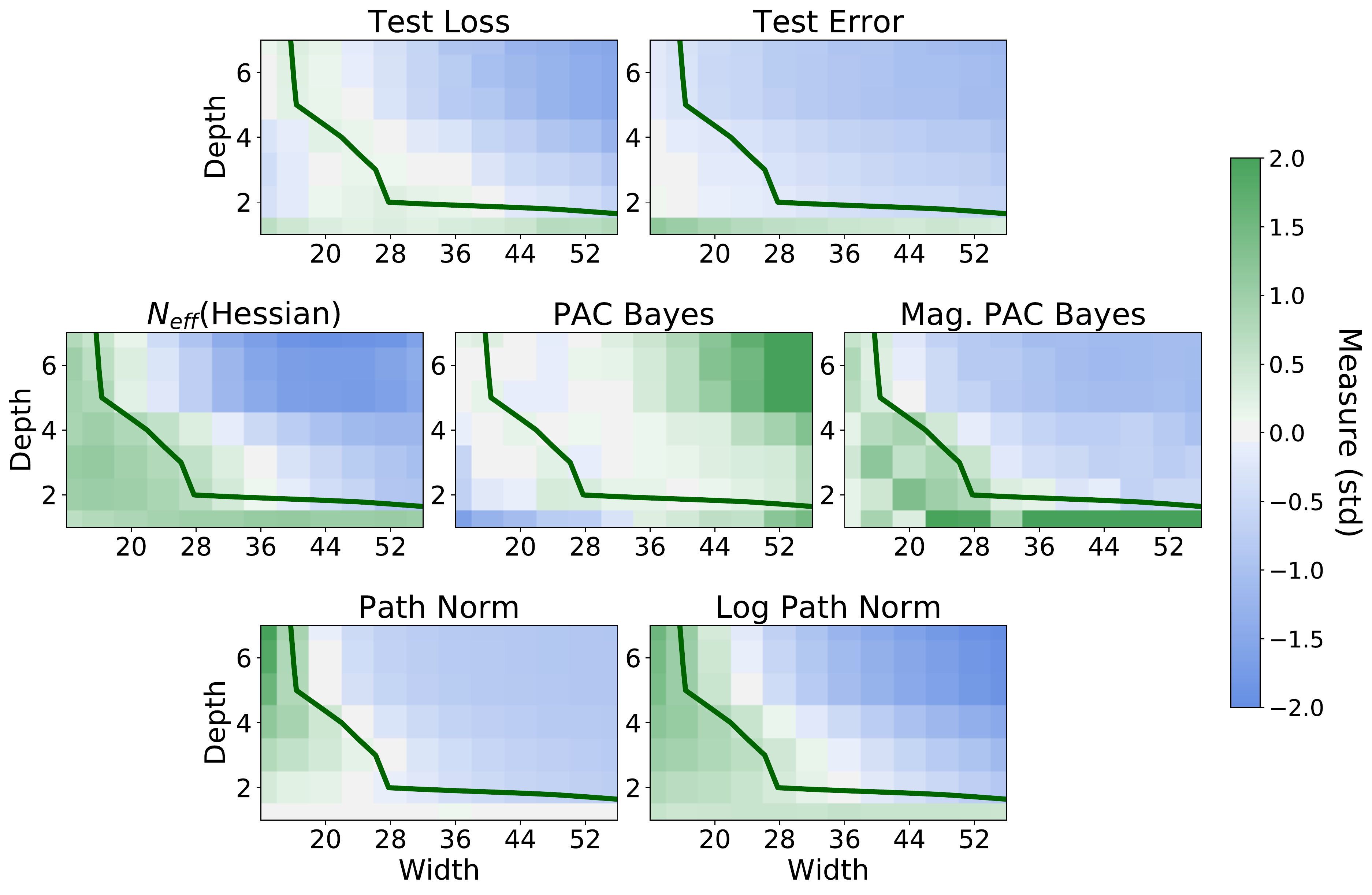} 
        \caption{\textit{Width vs. Depth}}
        \label{fig: width-depth-generalization}
    \end{subfigure}

    \caption{Comparing effective dimensionality as a generalization measure to the path-norm, log path-norm, PAC-Bayes, and magnitude aware PAC-Bayes flatness measure from \citet{jiang2019fantastic} for double descent without (a) and with (b) label noise, and (c) width-depth trade-offs. We find that among models that achieve low train loss, effective dimensionality most closely follows the trends in both test error and test loss. Path-norm is very large for small models where generalization performance is poor, and the PAC-Bayes measure grows with model size. In panel (c) the green curve separates models that achieve below $0.1$ training error.
    Additionally, we standardize all measures and test error to $0$ mean and unit variance for comparison.}
    \label{fig: generalization}
\end{figure*}

There are several key take-aways: (1) effective dimension overall provides a better proxy for test loss and test error than the PAC-Bayes or path-norm measures for double descent and width-depth trade-offs; (2) not all flatness-based generalization measures are equivalent, and in fact different flatness measures can provide wildly different behaviour; (3) effective dimension is more stable than the other measures, with relatively consistent behaviour across qualitatively similar datasets and models; (4) effective dimension is \emph{interpretable}, providing a clear connection with function-space and an explanation for \emph{why} a model should generalize, in addition to an association with generalization.

In general, path-norm acts only on model parameters, and is thus not directly connected to function-space or the shape of the loss. It should therefore be used with particular caution in comparing different architectures. Indeed, we found that path-norm tends to quickly saturate with increases in model size, with no preference between larger models even when these models provide very different generalization performance. We exhaustively considered how path-norm could be made to provide reasonable performance on these problems. For smaller models, the path-norm is orders of magnitude larger than the path-norm computed on larger models. We therefore performed a log transform, which to our knowledge has not been considered before. We surprisingly found the log transform dramatically improves the correlation of path-norm to generalization performance for comparing amongst large convolutional nets and residual nets. However, this modified measure is still only associated with generalization, is highly sensitive to experimental details (e.g. size of models being compared), and does not provide any direct intuition for model comparison given its reliance on parameters alone.

In the PAC-Bayesian measure used in \citet{jiang2019fantastic} we consider flatness around the solution in arbitrary directions. In Eq.~\ref{eqn: pacbayes-expec} we are considering an expectation over random perturbations, ultimately measuring the size of the region of low loss around an optimum in the loss surface. We see that this PAC-Bayesian measure actually increases with model size and is \emph{anti-correlated} with generalization performance.  For a fixed value of $\sigma$ the average magnitude of the perturbation, $u$, will increase as the number of parameters in the model increases. To combat the growth in magnitude of the perturbation, $\sigma$ must become smaller as the number of parameters in the model (i.e. the dimensionality of $u$) grows. The necessary decay in $\sigma$ as models grow leads to the measure in Eq.~\ref{eqn: pacbayes} becoming larger, as we see in practice. For the models that overfit the training data between widths $8$ and $12$, the parameters are highly sensitive to perturbations, and PAC-Bayesian bounds increase by $3$ to $4$ orders of magnitude. Therefore, for clarity, we exclude the PAC-Bayesian bounds for models under $15$ units wide, as the presence of a small number of ill-behaved points prevents the overall structure of the bounds for larger models from being visible. This effect is still pronounced in the experiment with label noise in Figure~\ref{fig: ddlabel}, making the significant increase in the measure with model size less visible (seen also in Figure~\ref{fig: pacbayes_cifar}). We additionally compute the magnitude-aware PAC-Bayes measure which helps mitigate these issues, though they still persist to an extent.

Not all flatness-based generalization measures are equivalent. In contrast to the PAC-Bayesian flatness measure considered in \citet{jiang2019fantastic}, effective dimension essentially computes the number of directions in parameter space that are flat, as determined by the curvature of the loss surface. Combined with our observations about function-space homogeneity in Section~\ref{sec: loss-surfaces}, we see that effective dimension provides a proxy for model compression, which is in informative about generalization performance, in addition to providing correlation with performance. By contrast, the PAC-Bayes flatness measure considers the size of the basin surrounding an optimum, and is highly sensitive to the sharpest direction. Accordingly, we find that effective dimension tends to be significantly more robust to experimental details than both the PAC-Bayes flatness measure and path-norm.

\section{Conclusion}

We have shown how effective dimensionality can be used to gain insight into a range of phenomena, including double descent, posterior contraction, loss surface structure, and function-space diversity of models. As we have seen, simple parameter counting can be a misleading proxy for model complexity and generalization performance; models with many parameters combined with a particular functional form can give rise to simple explanations for data. Indeed, we have seen how depth and width have different effects on generalization performance, regardless of the total number of parameters in the model. In all cases, effective dimensionality tracks generalization performance for models with comparable training loss, helping to explain behaviour that appears mysterious when measured against simple parameter counting. Moving forward, we hope our work will help inspire a continued effort to capture the nuanced interplay between the statistical properties of parameter space and function space in understanding generalization behaviour.

\subsection*{Acknowledgements}

WJM, GB, and AGW were supported 
by an Amazon Research Award, Facebook Research, NSF I-DISRE 193471, NIH R01 DA048764-01A1, NSF IIS-1563887, and NSF IIS-1910266.
WJM was additionally supported by an NSF Graduate Research Fellowship under Grant No. DGE-1839302.
We would like to thank Jayson Salkey and Pavel Izmailov for helpful discussions and Coffee Project NY for their deconstructed lattes.


\bibliographystyle{icml2020}
\bibliography{refs}

\cleardoublepage
\appendix
\renewcommand\thefigure{A.\arabic{figure}}   
\setcounter{figure}{0}

\section{The Hessian and Effective Dimensionality over the Course of Training}
\label{app:hess_training}

One possible limitation of using the Hessian as a measurement for posterior contraction for (Bayesian) deep learning would be if the Hessian was constant through the training procedure, or if the eigenvalues of the Hessian remained constant.
\citet{jacot_neural_2018} showed that in the limit of infinite width neural networks, the Hessian matrix converges to a constant,
in a similar manner to how the Fisher information matrix and Jacobian matrices converge to a constant limit, producing the neural tangent kernel (NTK) \citep{jacot_neural_2018}.
However, \citet{lee2019wide} recently showed that while the infinite width NTK is a good descriptor of finitely wide neural networks, the corresponding finite width NTK is not constant throughout training.
Similarly, the empirical observations of \citet{papyan2018full}, \citet{sagun2016eigenvalues}, and \citet{ghorbani2019investigation} demonstrate that even for extremely wide neural networks, the Hessian is not constant through training.

Preliminary experiments with both the Fisher information matrix (using fast Fisher vector products as described in \citet{maddox_linearizing_2019}) and the NTK demonstrated similar empirical results in terms of double descent and effective dimensionality as the Hessian matrix.

\section{Further Statements on Effective Dimensionality}
\label{app:effective_dimensionality_extensions}
In this section, we provide further results the effective dimensionality, including its connection to both the bias-variance decomposition of predictive risk \citep{geman1992neural,dobriban2018} as well as the Hilbert space norm of the induced kernel \citep{rasmussen_gaussian_2008}.

\subsection{Effective Dimensionality of the Inverse of $A$}\label{app:neff_proof}
We show that 
\begin{align}
rank(A) - N_{eff}(A, \alpha) = N_{eff}(A^{+}, 1/\alpha)
\label{eq:neff-tradeoff},
\end{align}
formalizing the idea that as the effective dimensionality of the covariance increases, the effective dimensionality of the inverse covariance decreases. This statement is alluded to in the analysis of \citet{mackay1992interpolation} but is not explicitly shown.

We assume that $A$ has rank $r$ and that $\alpha \neq 0;$ we also assume that $A^+$ is formed by inverting the non-zero eigenvalues of $A$ and leaving the zero eigenvalues fixed in the eigendecomposition of $A$ (i.e. the Moore-Penrose pseudo-inverse).
With $\lambda_i$ as the eigenvalues of $A$, we can see that
\begin{align*}
r - N_{eff}(A, \alpha) &= \sum_{i=1}^r \frac{\lambda_i + \alpha - \lambda_i}{\lambda_i + \alpha} = \alpha \sum_{i=1}^r \frac{1}{\lambda_i + \alpha}  \\
&=\sum_{i=1}^r \frac{1}{1/\alpha}\frac{1}{\lambda_i + \alpha}
=\sum_{i=1}^r \frac{1}{\lambda_i / \alpha + 1}  \\
&= \sum_{i=1}^r \frac{1/\lambda_i}{1/\lambda_i(\lambda_i / \alpha + 1)} = \sum_{i=1}^r \frac{1/\lambda_i}{1/\alpha + 1/\lambda_i} \\
&= N_{eff}(A^+, 1/\alpha).
\end{align*}
When $A$ is invertible, the result reduces to $k - N_{eff}(A, \alpha) = N_{eff}(A^{-1}, 1/\alpha)$ for $A \in \mathbb{R}^{k \times k}$.

\subsection{Predictive Risk for Bayesian Linear Models}\label{app:pred_risk}
\citet{dobriban2018} and \citet{hastie_surprises_2019} have extensively studied over-parameterized ridge regression. In particular, Theorem 2.1 of \citet{dobriban2018} gives the predictive risk (e.g. the bias-variance decomposition of \citet{geman1992neural}) as a function of effective dimensionality and intrinsic noise.
The critical aspect of their proof is to decompose the variance of the estimate into the effective dimensionality and a second term which then cancels with the limiting bias estimate.
For completeness, we restate Theorem 2.1 of \citet{dobriban2018} theorem for fixed feature matrices, $\Phi,$ and an explicit prior on the parameters, $\beta \sim \mathcal{N}(0, \alpha^2 I)$, leaving the proof to the original work.
\begin{theorem}[Predictive Risk of Predictive Mean for Ridge Regression]
	Under the assumption of model correct specification, $y = \Phi \beta + \epsilon,$ with $\beta$ drawn from the prior and $\epsilon \sim \mathcal{N}(0, I_n),$
	and defining $\hat{f} = \Phi \hat{\beta},$ with $\hat{\beta} = (\Phi^\top \Phi + \alpha^{-2} I)^{-1} \Phi^\top y$ (the predictive mean under the prior specification), then
	\begin{align}
	R(\Phi) = \mathbb{E}&(||Y - \hat{f}||_2^2) = 1 +  \frac{1}{n} N_{eff}(\Phi\Phi^\top, \alpha^{-2}).
	\end{align}
\end{theorem}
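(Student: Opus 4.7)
The plan is to read the displayed identity as a per-sample risk, $R(\Phi) = \tfrac{1}{n}\mathbb{E}\|Y-\hat{f}\|_2^2$, with $Y = \Phi\beta + \varepsilon_{\text{new}}$ a fresh draw from the generative model, so that the test noise $\varepsilon_{\text{new}}$ is independent of the training noise $\varepsilon_{\text{tr}}$ that actually enters $\hat{\beta}$ through $Y_{\text{tr}} = \Phi\beta + \varepsilon_{\text{tr}}$. Under this reading the target $R(\Phi) = 1 + N_{eff}(\Phi\Phi^\top,\alpha^{-2})/n$ is exactly what drops out; had $Y$ instead denoted the training response itself, the corresponding in-sample calculation would give $1 - N_{eff}/n$ (vanishing in the interpolating limit $\alpha\to\infty$), which cannot match the stated form.

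First I would introduce the hat matrix $H := \Phi(\Phi^\top\Phi + \alpha^{-2}I_k)^{-1}\Phi^\top$ and apply the push-through identity $\Phi(\Phi^\top\Phi + \alpha^{-2}I_k)^{-1} = (\Phi\Phi^\top + \alpha^{-2}I_n)^{-1}\Phi$ to rewrite $H = (\Phi\Phi^\top + \alpha^{-2}I_n)^{-1}\Phi\Phi^\top$. Diagonalizing $\Phi\Phi^\top = U\Lambda U^\top$ with $\Lambda = \mathrm{diag}(\lambda_1,\dots,\lambda_n)$, both $H$ and $I-H$ are simultaneously diagonal in the basis $U$, with eigenvalues $\lambda_i/(\lambda_i + \alpha^{-2})$ and $\alpha^{-2}/(\lambda_i + \alpha^{-2})$ respectively; all remaining trace computations then reduce to scalar sums over the $\lambda_i$.

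Next I would decompose the prediction error as
\[
Y - \hat f \;=\; Y - HY_{\text{tr}} \;=\; (I-H)\Phi\beta + \varepsilon_{\text{new}} - H\varepsilon_{\text{tr}},
\]
and exploit the mutual independence and zero-mean of $\beta,\varepsilon_{\text{tr}},\varepsilon_{\text{new}}$ to kill all cross terms, so that
\[
\mathbb{E}\|Y - \hat f\|_2^2 \;=\; \alpha^2\,\mathrm{tr}\!\bigl((I-H)^2\Phi\Phi^\top\bigr) + n + \mathrm{tr}(H^2).
\]
In the eigenbasis the bias piece contributes $\sum_i \alpha^{-2}\lambda_i/(\lambda_i+\alpha^{-2})^2$ and the estimator-variance piece $\sum_i \lambda_i^2/(\lambda_i+\alpha^{-2})^2$, and the key collapse $\alpha^{-2}\lambda_i + \lambda_i^2 = \lambda_i(\lambda_i + \alpha^{-2})$ telescopes these two competing $(\lambda_i+\alpha^{-2})^{-2}$ sums into a single $(\lambda_i+\alpha^{-2})^{-1}$ sum,
\[
\sum_{i=1}^n \frac{\alpha^{-2}\lambda_i + \lambda_i^2}{(\lambda_i + \alpha^{-2})^2} \;=\; \sum_{i=1}^n \frac{\lambda_i}{\lambda_i + \alpha^{-2}} \;=\; N_{eff}(\Phi\Phi^\top,\alpha^{-2}).
\]
Thus $\mathbb{E}\|Y - \hat f\|_2^2 = n + N_{eff}(\Phi\Phi^\top,\alpha^{-2})$, and dividing by $n$ yields the stated identity.

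The real obstacle here is interpretive rather than technical: one must cleanly separate training-noise from test-noise to obtain the additive $+N_{eff}/n$ (not the subtractive in-sample version), and one must respect the per-sample normalization the formula implicitly uses. Once the hat-matrix identity and the independence structure are in place, the argument is pure spectral bookkeeping, and the substance of the classical bias-variance decomposition is captured by the single algebraic telescope that fuses the bias term and the estimator variance into the effective-dimensionality sum.
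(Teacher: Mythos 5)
Your proof is correct. Note that the paper itself does not supply a proof of this theorem --- it explicitly defers to Theorem 2.1 of Dobriban and Wager --- so your self-contained fixed-design computation actually fills that gap, and it realizes exactly the mechanism the surrounding text describes (``decompose the variance of the estimate into the effective dimensionality and a second term which then cancels with the \ldots bias''): your telescope $\alpha^{-2}\lambda_i + \lambda_i^2 = \lambda_i(\lambda_i+\alpha^{-2})$ is precisely that cancellation. Your two interpretive choices are also the right ones and worth making explicit, as the theorem statement is sloppy on both points: the risk must be per-sample (otherwise the additive constant would be $n$, not $1$), and $Y$ must be a fresh response at the training design with noise independent of the training noise, since the in-sample version yields $1 - N_{eff}/n$ exactly as you note. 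The only nitpick is your parenthetical that the in-sample quantity ``vanishes'' as $\alpha\to\infty$: it tends to $1-\mathrm{rank}(\Phi\Phi^\top)/n$, which is zero only when $\Phi\Phi^\top$ has full rank $n$; this aside plays no role in the argument.
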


\subsection{Expected RKHS Norm}\label{app:ehsn}
Finally, we show another unexpected connection of the effective dimensionality --- that the reproducing kernel Hilbert space (RKHS) norm is in expectation, under model correct specification, the effective dimensionality.
We follow the definition of Gaussian processes of \citet{rasmussen_gaussian_2008} and focus on the definition of the RKHS given in \citet[Chapter 6]{rasmussen_gaussian_2008}, which is defined as $||f||_\mathcal{H}^2 = \langle f,f\rangle_\mathcal{H}=\sum_{i=1}^N f_i^2/\lambda_i,$ where $\lambda_i$ are the eigenvalues associated with the kernel operator, $K,$ of the RKHS, $\mathcal{H}$.\footnote{Note that the expectation we take in the following is somewhat separate than the expectation taken in \citet{rasmussen_gaussian_2008} which is directly over $f_i.$}
The kernel is the covariance matrix of the Gaussian process, and assuming that the response is drawn from the same model, then $y \sim \mathcal{N}(0, K + \sigma^2 I),$ then $a= (K + \sigma^2 I)^{-1} y,$ where $a$ is the optimal weights of the function with respect to the kernel, e.g. $f = \sum_{i=1}^N a_i K(x, .).$
To compute the Hilbert space norm, we only need to compute the optimal weights and the eigenvalues of the operator. For finite (degenerate) Hilbert spaces this computation is straightforward:
\begin{align*}
\mathbb{E}_{p(y)}&(||f||_\mathcal{H}^2) =\mathbb{E}_{p(y)}(a^\top K a) \\
&= \mathbb{E}_{p(y)}(y^\top (K+\sigma^2 I)^{-1} K (K+\sigma^2 I)^{-1} y) \\
&=\mathbb{E}_{p(y)} tr(y^\top (K+\sigma^2 I)^{-1} K (K+ \sigma^2 I)^{-1} y) \\
&= \mathbb{E}_{p(y)} tr((K+\sigma^2 I)^{-1} K (K+ \sigma^2 I)^{-1} y y^\top) \\
&= tr((K+\sigma^2 I)^{-1} K (K+ \sigma^2 I)^{-1} (K+\sigma^2I)) \\
& = N_{eff}(K, \sigma^2)
\end{align*}
with the second equality coming by plugging in the optimal $a$ (see \citet[][Chapter 6]{rasmussen_gaussian_2008} and \citet{belkin2019reconciling} as an example).
As linear models with Gaussian priors are Gaussian processes with a degenerate feature expansion, the expected RKHS norm becomes $N_{eff}(\Phi^\top \Phi, \sigma^2/\alpha^2),$ which is the same value as our definition of posterior contraction. 
Further research connecting these two ideas is needed.

\section{Measuring Posterior Contraction in Bayesian Generalized Linear Models}
\label{app:linear_eff}

We first consider the over-parametrized case, $k > n$:

\begin{align}
    \Delta_{post}(\theta) &= tr(Cov_{p(\theta)}(\theta)) - tr(Cov_{p(\theta|\mathcal{D})}(\theta)) \nonumber\\
    &=\sum_{i=1}^k \alpha^2 - \sum_{i=1}^n (\lambda_i + \alpha^{-2})^{-1} + \sum_{i=n+1}^k \alpha^2 \nonumber \\
    &=k\alpha^2 - (k - n)\alpha^2 - \sum_{i=1}^n (\lambda_i + \alpha^{-2})^{-1} \nonumber \\
    &=\sum_{i=1}^n\frac{1 - \alpha^2 (\lambda_i + \alpha^{-2})}{\lambda_i + \alpha^{-2}} \nonumber \\
    &=\alpha^2 \sum_{i=1}^n \frac{\lambda_i}{\lambda_i + \alpha^{-2}};
\end{align}
where we have used Theorem \ref{thm: post-contraction} to assess the eigenvalues of the posterior covariance.
When $n > k$, we have the simplified setting where the summation becomes to $k$ instead of $n$, giving us that all of the eigenvalues are shifted from their original values to become $\lambda_i + \alpha^{-2},$ and so
\begin{align}
    \Delta_{post.}(\theta) = \alpha^{-2} \sum_{i=1}^k \frac{\lambda_i}{\lambda_i + \alpha^{-2}},
\end{align}
where $\lambda_i$ is the $i$th eigenvalue of $\Phi^\top \Phi/\sigma^2.$

\subsection{Contraction in Function Space}
We can additionally consider the posterior contraction in function space.
For linear models, the posterior covariance on the training data in function space becomes
\begin{align}
    \Phi \Sigma_{\beta | \mathcal{D}}\Phi^\top = \sigma^2 \Phi (\Phi^\top \Phi + \frac{\sigma^2}{\alpha^2}I_p)^{-1} \Phi^\top,
\end{align}
while the prior covariance in function space is given by $\alpha^2 \Phi \Phi^\top.$
We will make the simplifying assumption that the features are normalized such that $tr(\Phi \Phi^\top) = rank(\Phi \Phi^\top) = r.$
Now, we can simplify
\begin{align*}
    \Delta_{post}(f) &= tr(Cov_{p(f)}(f) - tr(Cov_{p(f|\mathcal{D})}(f)) \nonumber \\
    &=\alpha^2 r - \sigma^2 \sum_{i=1}^r \frac{\lambda_i}{\lambda_i + \sigma^2 / \alpha^2} \nonumber \\
   &=\alpha^2 \sum_{i=1}^r \frac{\lambda_i + \sigma^2 / \alpha^2}{\lambda_i + \sigma^2 / \alpha^2} - \sigma^2 \sum_{i=1}^r \frac{\lambda_i}{\lambda_i + \sigma^2 / \alpha^2} \nonumber \\
   =(\alpha^2& - \sigma^2)\sum_{i=1}^r \frac{\lambda_i}{\lambda_i + \sigma^2/\alpha^2} + \sigma^2 \sum_{i=1}^r \frac{1}{\lambda_i + \sigma^2/\alpha^2} \nonumber \,. \\
\end{align*}
Simplifying and recognizing these summations as the effective dimensionalities of $\Phi^\top \Phi$ and $(\Phi^\top \Phi)^+$,  we get that
\begin{align}
  \Delta_{post}(f) &= (\alpha^2 - \sigma^2)N_{eff}(\Phi^\top \Phi, \sigma^2/\alpha^2) \nonumber \\
  &\hspace{1cm} + \sigma^2 N_{eff}((\Phi^\top \Phi)^+, \alpha^2/\sigma^2) \\
  &=\sigma^2r + (\alpha^2 - 2\sigma^2)N_{eff}(\Phi^\top \Phi, \sigma^2/\alpha^2), \nonumber
\end{align}
thereby showing that the posterior contraction in function space is explicitly tied to the effective dimensionality of the Gram matrix.

\section{Posterior Contraction and Function-Space Homogeneity Proofs and Additional Theorems}

In this section we complete the proofs to Theorems \ref{thm: post-contraction} and \ref{thm: function-homog} and extend the results from linear models to generalized linear models.

\subsection{Proof and Extensions to Theorem \ref{thm: post-contraction}}\label{app: post-contraction}
\begin{theorem*}[Posterior Contraction in Bayesian Linear Models]
Let $\Phi = \Phi(x) \in \mathbb{R}^{n \times k}$ be a feature map of $n$ data observations, $x$, with $n < k$ and assign isotropic prior $\beta \sim \mathcal{N}(0_k, S_0 = \alpha^2I_k)$ for parameters $\beta \in \mathbb{R}^k$. Assuming a model of the form $y \sim \mathcal{N}(\Phi \beta, \sigma^2 I_{n})$ the posterior distribution of $\beta$ has an $p-k$ directional subspace in which the variance is identical to the prior variance.
\end{theorem*}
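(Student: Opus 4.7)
The plan is to reduce the claim to a rank statement about $\Phi^\top \Phi$ and then read off the posterior eigenstructure in closed form. First I would write the (Gaussian conjugate) posterior covariance as
\begin{equation*}
\Sigma_{\beta \mid \mathcal{D}} \;=\; \bigl( \sigma^{-2}\Phi^\top \Phi + \alpha^{-2} I_k \bigr)^{-1},
\end{equation*}
which is the standard result for a Gaussian likelihood with isotropic Gaussian prior. Because $\Phi \in \mathbb{R}^{n \times k}$ with $n < k$, the Gram matrix $\Phi^\top \Phi$ is $k \times k$, symmetric, positive semi-definite, and has rank at most $n$, so at least $k-n$ of its eigenvalues are exactly zero.

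Next I would diagonalize $\Phi^\top \Phi = V \Lambda V^\top$ with $V$ orthogonal and $\Lambda = \operatorname{diag}(\lambda_1, \dots, \lambda_n, 0, \dots, 0)$. Since $I_k = V V^\top$, the posterior covariance becomes
\begin{equation*}
\Sigma_{\beta \mid \mathcal{D}} \;=\; V \bigl( \sigma^{-2} \Lambda + \alpha^{-2} I_k \bigr)^{-1} V^\top,
\end{equation*}
whose eigenvalues are $(\sigma^{-2}\lambda_i + \alpha^{-2})^{-1}$ along the eigendirections $v_i$. For the $k-n$ indices $i$ with $\lambda_i = 0$ this value collapses to $\alpha^{2}$, which is exactly the prior variance along each such $v_i$. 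Thus the subspace $\operatorname{span}(v_{n+1}, \dots, v_k) = \ker(\Phi^\top \Phi) = \ker(\Phi)$ is a $(k-n)$-dimensional subspace on which the posterior variance equals the prior variance $\alpha^{2}$, which establishes the theorem.

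For the generalized linear model extension (mentioned as included in the same appendix), I would invoke a Laplace approximation at the MAP: the posterior precision becomes $\Phi^\top W(\beta_{\text{MAP}}) \Phi + \alpha^{-2} I_k$ for a diagonal data-dependent weight matrix $W$ arising from the GLM log-likelihood. Since $W$ is diagonal and $\Phi$ has rank at most $n$, the matrix $\Phi^\top W \Phi$ still has rank at most $n$, and the same eigen-decomposition argument yields $k-n$ directions (the kernel of $\Phi$) along which the approximate posterior variance is $\alpha^{2}$. The only technical care point is that this is an \emph{approximate} posterior, so the statement should be interpreted in the Laplace sense; the linear-Gaussian case is exact. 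I do not anticipate a real obstacle beyond being explicit about that caveat and about the conventional derivation of $\Sigma_{\beta \mid \mathcal{D}}$ (completing the square in the log-posterior), which is routine.
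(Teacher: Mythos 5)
Your proof of the linear-model statement is correct and is essentially the paper's own argument: write the conjugate posterior covariance $\bigl(\sigma^{-2}\Phi^\top\Phi + \alpha^{-2}I_k\bigr)^{-1}$, diagonalize $\Phi^\top\Phi$, and observe that the $k-n$ zero eigenvalues of the rank-$\le n$ Gram matrix yield posterior variance exactly $\alpha^2$ along $\ker(\Phi)$. One small remark on your ancillary GLM plan: the paper does not use a Laplace approximation there; it exploits the fact that the GLM likelihood depends on $\beta$ only through $\Phi\beta$ (via a rotation-invariance argument following Neal), so the \emph{exact} posterior coincides with the prior on a $k-n$ dimensional subspace, making your "approximate posterior" caveat unnecessary.
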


\begin{proof}
The posterior distribution of $\beta$ in this case is known and given as
\begin{equation}\label{eqn: beta-posterior}
\begin{aligned}
\beta|\mathcal{D} &\sim \mathcal{N}\left((\mu | \mathcal{D}), (\Sigma | \mathcal{D})\right) \\
\mu | \mathcal{D} &= (\Phi^\top \Phi/\sigma^2 + S_0^{-1})^{-1} \Phi^\top y/\sigma^2\\
\Sigma | \mathcal{D} &= (\Phi^\top \Phi/\sigma^2 + S_0^{-1})^{-1}
\end{aligned}
\end{equation}
We want to examine the distribution of the eigenvalues of the posterior variance. Let $\Phi^\top\Phi/\sigma^2 = V \lambda_n V^\top$ be the eigendecomposition with eigenvalues $\Lambda = \textrm{diag}(\gamma_1, \dots, \gamma_n, 0_{n+1}, \dots, 0_k)$;  $k-n$ of the eigenvalues are 0 since the gram matrix $\Phi^\top\Phi$ is at most rank $n$ by construction.

Substitution into the posterior variance of $\beta$ yields,
\begin{equation}
\begin{aligned}
     (\Phi^\top \Phi/\sigma^2 + S_0^{-1})^{-1} &= (V\Lambda V^\top + \alpha^{-2}I_k)^{-1}\\
     &= V(\Lambda + \alpha^{-2}I_k)^{-1}V^\top\\
     &= V\Gamma V^\top.
\end{aligned}
\end{equation}
The eigenvalues of the posterior covariance matrix are given by the entries of $\Gamma$, $\left((\gamma_{1} + \alpha^{-2})^{-1}, \dots, (\gamma_n + \alpha^{-2})^{-1}, \alpha^2, \dots, \alpha^2\right)$, where there are $k-n$ eigenvalues that retain a value of $\alpha^2$.

Therefore the posterior covariance has $p-n$ directions in which the posterior variance is unchanged and $n$ directions in which it has contracted as scaled by the eigenvalues of the gram matrix $\Phi^\top \Phi$.
\end{proof}

Generalized linear models (GLMs) do not necessarily have a closed form posterior distribution.
However, \citet{neal_high_2006}  give a straightforward argument using the invariance of the likelihood of GLMs to orthogonal linear transformation in order to justify the usage of PCA as a feature selection step. We can adapt their result to show that overparameterized GLMs have a $k -n$ dimensional subspace in which the posterior variance is identical to the prior variance.

\begin{theorem}[Posterior Contraction in Generalized Linear Models]\label{thm: post-contraction-glm}
We specify a generalized linear model, $E[Y] = g^{-1}(\Phi \beta)$ and $Var(Y) = V(g^{-1}(\Phi \beta))$, where $\Phi \in \mathbb{R}^{n \times k}$ is a feature matrix of $n$ observations and $k$ features and $\beta \in \mathbb{R}^k$ are the model parameters. In the overparameterized setting with isotropic prior on $\beta$, there exists a $k-n$ dimensional subspace in which the posterior variance is identical to the prior variance.
\end{theorem}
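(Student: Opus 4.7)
\medskip

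\noindent\textbf{Proof Proposal for Theorem \ref{thm: post-contraction-glm}.} My plan is to leverage an invariance/symmetry argument rather than attempt a closed-form computation of the posterior covariance (which is generally unavailable for GLMs). The central observation is that in a GLM, both the mean function $g^{-1}(\Phi\beta)$ and the variance function $V(g^{-1}(\Phi\beta))$ depend on $\beta$ only through the linear combination $\Phi\beta$. Since $\Phi \in \mathbb{R}^{n \times k}$ with $n < k$ has rank at most $n$, its null space $N(\Phi) \subseteq \mathbb{R}^k$ has dimension at least $k - n$. The idea is that the likelihood is constant along $N(\Phi)$, so the posterior and prior must agree on this subspace whenever the prior is rotationally symmetric.

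To make this rigorous, I would first take an orthonormal change of basis. Let $\Phi = U \Sigma V^\top$ be an SVD, and partition the columns of $V$ as $V = [V_1 \mid V_2]$, where $V_1 \in \mathbb{R}^{k \times n}$ spans the row space of $\Phi$ and $V_2 \in \mathbb{R}^{k \times (k-n)}$ spans $N(\Phi)$. Reparameterize via $\gamma = V^\top \beta$, writing $\gamma = (\gamma_1, \gamma_2)$ with $\gamma_1 = V_1^\top \beta$ and $\gamma_2 = V_2^\top \beta$. Because $V$ is orthogonal and the prior is isotropic Gaussian, the transformed prior factors as $\gamma_1 \sim \mathcal{N}(0, \alpha^2 I_n)$ and $\gamma_2 \sim \mathcal{N}(0, \alpha^2 I_{k-n})$ independently. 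Moreover $\Phi \beta = U \Sigma V^\top \beta = U \Sigma_1 \gamma_1$, where $\Sigma_1$ is the upper $n \times n$ block of $\Sigma$, so the full GLM likelihood $p(y \mid \beta)$ depends only on $\gamma_1$.

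From here the conclusion follows by Bayes' rule in the new coordinates:
\begin{equation*}
p(\gamma_1, \gamma_2 \mid \mathcal{D}) \propto p(y \mid \gamma_1)\, p(\gamma_1)\, p(\gamma_2),
\end{equation*}
so $\gamma_2$ remains independent of $\gamma_1$ a posteriori and its posterior marginal equals its $\mathcal{N}(0, \alpha^2 I_{k-n})$ prior. Translating back via $\beta = V\gamma$, the posterior covariance satisfies $V_2^\top \mathrm{Cov}(\beta \mid \mathcal{D}) V_2 = \alpha^2 I_{k-n}$, exhibiting the desired $k-n$ dimensional subspace (namely $N(\Phi) = \mathrm{span}(V_2)$) on which the posterior variance is unchanged from the prior.

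The main technical obstacle is justifying the factorization step for arbitrary GLM response distributions and link functions, where integrability of the posterior is not automatic. I would address this by noting that the argument only requires the likelihood to be a measurable function of $\Phi\beta$ (true for any GLM) and the prior to be rotationally symmetric with respect to $N(\Phi)$ (true for the isotropic Gaussian); the conclusion about the conditional marginal then follows at the level of densities without needing normalization constants. This mirrors exactly the PCA-justification argument of \citet{neal_high_2006}, simply applied to the orthogonal decomposition $\mathbb{R}^k = \mathrm{row}(\Phi) \oplus N(\Phi)$ instead of to principal components of a feature covariance.
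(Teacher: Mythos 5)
Your proposal is correct and follows essentially the same route as the paper's proof: both exploit the fact that the GLM likelihood depends on $\beta$ only through $\Phi\beta$, apply an orthogonal change of basis aligned with the null space of $\Phi$ (your explicit SVD with $V=[V_1\mid V_2]$ is exactly the rotation $R$ the paper chooses so that $\Phi R$ has $k-n$ zero columns), and use rotational invariance of the isotropic prior to conclude the posterior is unchanged along those directions. If anything, your version is more explicit than the paper's --- the factorization $p(\gamma_1,\gamma_2\mid\mathcal{D}) \propto p(y\mid\gamma_1)\,p(\gamma_1)\,p(\gamma_2)$ and the resulting block identity $V_2^\top \mathrm{Cov}(\beta\mid\mathcal{D})V_2 = \alpha^2 I_{k-n}$ make rigorous what the paper only asserts.
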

\begin{proof}
First note that the likelihood of a GLM takes as argument $\Phi \beta$, thus transformations that leave $\Phi \beta$ unaffected leave the likelihood, and therefore the posterior distribution, unaffected.

Let $R$ be an orthogonal matrix, $R^\top R = RR^\top = I_p$, and $\tilde{\beta} = R\beta \sim N(0, \sigma^2 I)$.
If we assign a standard isotropic prior, to $\beta$ then $\tilde{\beta} = R\beta \sim \mathcal{N}(0, \sigma^2 R I_k R^\top=\sigma^2I_k)$. If we also rotate the feature matrix, $\tilde{\Phi} = \Phi R^\top \in \mathbb{R}^{n \times k}$ so that $\tilde{\Phi} \tilde{\beta} = \Phi R^\top R \beta = \Phi \beta$, showing that the likelihood and posterior remain unchanged under such transformations.

In the overparameterized regime, $k > n$, with linearly independent features we have that $\Phi$ has rank at most $k$, and we can therefore choose $R$ to be a rotation such that $\Phi R$ has exactly $k-n$ columns that are all 0. This defines a $k-n$ dimensional subspace of $\beta \in \mathbb{R}^k$ in which the the likelihood is unchanged. Therefore the posterior remains no different from the prior distribution in this subspace, or in other words, the posterior distribution has not contracted in $k-n$ dimensions.
\end{proof}

\subsection{Function-Space Homogeneity}\label{app: function-homog}

\begin{theorem*}[Function-Space Homogeneity in Linear Models]
Let $\Phi = \Phi(x) \in \mathbb{R}^{n \times k}$ be a feature map of $n$ data observations, $x$, with $n < k$ and assign isotropic prior $\beta \sim \mathcal{N}(0_k, S_0 = \alpha^2I_k)$ for parameters $\beta \in \mathbb{R}^k$.
The minimal eigenvectors of the Hessian define a $k-n$ dimensional subspace in which parameters can be perturbed without changing the training predictions in function-space.
\end{theorem*}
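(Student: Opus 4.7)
\textbf{Proof plan for Theorem \ref{thm: function-homog}.} The strategy is to exploit the simple algebraic structure of the Hessian for a Bayesian linear model and identify its minimal eigenvectors with the null space of the feature map $\Phi$. The negative log posterior (up to additive constants independent of $\beta$) is
\begin{equation}
\mathcal{L}(\beta) = \frac{1}{2\sigma^2}\|y - \Phi\beta\|_2^2 + \frac{1}{2\alpha^2}\|\beta\|_2^2,
\end{equation}
so the Hessian is
\begin{equation}
\mathcal{H}_\beta = \frac{1}{\sigma^2}\Phi^\top\Phi + \frac{1}{\alpha^2} I_k.
\end{equation}
This additive structure immediately tells us that $\mathcal{H}_\beta$ shares its eigenvectors with $\Phi^\top\Phi$, and each eigenvalue of $\mathcal{H}_\beta$ is the corresponding eigenvalue of $\Phi^\top\Phi/\sigma^2$ shifted by $1/\alpha^2$.

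Next I would use the rank constraint. Since $\Phi \in \mathbb{R}^{n\times k}$ with $n < k$, the matrix $\Phi^\top\Phi$ has rank at most $n$, so its eigendecomposition $V\Lambda V^\top$ has at least $k-n$ zero eigenvalues. Let $V_0$ denote the $k \times (k-n)$ block of eigenvectors corresponding to these zero eigenvalues; by the Hessian calculation above, these are precisely the eigenvectors of $\mathcal{H}_\beta$ whose eigenvalue attains the minimum value $1/\alpha^2$. The key identification is then that $V_0$ spans the null space of $\Phi$: for any column $v$ of $V_0$, $\Phi^\top\Phi v = 0$ implies $\|\Phi v\|_2^2 = v^\top\Phi^\top\Phi v = 0$, hence $\Phi v = 0$.

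Finally, I would translate this back into function space. For any $\beta^*$ and any perturbation $\delta \in \mathrm{span}(V_0)$, the training predictions satisfy
\begin{equation}
\Phi(\beta^* + \delta) = \Phi\beta^* + \Phi\delta = \Phi\beta^*,
\end{equation}
so the predictions on the training inputs are exactly invariant under perturbations in this $(k-n)$-dimensional subspace determined by the minimal eigenvectors of $\mathcal{H}_\beta$. The generalized-linear-model extension would proceed in the same spirit: the likelihood depends on parameters only through $\Phi\beta$, so any $\delta$ in the null space of $\Phi$ leaves $g^{-1}(\Phi(\beta+\delta)) = g^{-1}(\Phi\beta)$ unchanged, while the Hessian of the GLM loss at the MAP estimate inherits the prior's $\alpha^{-2}I_k$ term plus a term of the form $\Phi^\top W \Phi$ with $W$ a positive diagonal weight matrix, which still has null space containing $\ker(\Phi)$ and still has $k-n$ minimal eigenvectors equal to $1/\alpha^2$.

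There is no genuine obstacle here; the only subtlety is to be explicit about which Hessian we mean (loss versus negative log likelihood) and to be careful that the prior term $\tfrac{1}{\alpha^2}I_k$ does not disturb the eigenvector correspondence — it preserves eigenvectors and uniformly shifts eigenvalues, which is exactly what makes the argument go through cleanly. The rest is a short algebraic verification.
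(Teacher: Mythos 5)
Your proposal is correct and follows essentially the same route as the paper's proof: identify the Hessian as $\Phi^\top\Phi/\sigma^2 + \alpha^{-2}I_k$, use the rank bound on $\Phi^\top\Phi$ to exhibit $k-n$ minimal eigenvectors with eigenvalue $\alpha^{-2}$, show these lie in $\ker(\Phi)$ via the quadratic form $v^\top\Phi^\top\Phi v = \|\Phi v\|_2^2$, and conclude $\Phi(\beta+\delta)=\Phi\beta$. Your GLM sketch likewise matches the paper's appendix argument that the likelihood depends on $\beta$ only through $\Phi\beta$.
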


\begin{proof}
The posterior covariance matrix for the parameters is given by
$$
\Sigma_{\beta | \mathcal{D}} = \left(\frac{\Phi^\top \Phi}{\sigma^2} + \alpha^{-2}I_k\right)^{-1},
$$
and therefore the Hessian of the log-likelihood is $\left(\frac{\Phi^\top \Phi}{\sigma^2} + \alpha^{-2}I_k\right)$. By the result in Theorem \ref{thm: post-contraction}  there are $k-n$ eigenvectors of the Hessian all with eigenvalue $\alpha^{-2}$. If we have some perturbation to the parameter vector $u$ that resides in the span of these eigenvectors we have
$$
\left(\frac{\Phi^\top \Phi}{\sigma^2} + \alpha^{-2}I_k\right)u = \alpha^{-2}u,
$$
which implies $u$ is in the nullspace of $\Phi^\top \Phi$. By the properties of gram matrices we have that the nullspace of $\Phi^\top \Phi$ is the same as that of $\Phi$,
thus $u$ is also in the nullspace of $\Phi$
Therefore any prediction using perturbed parameters takes the form $\hat{y} = \Phi(\beta +u) = \Phi \beta$, meaning the function-space predictions on training data under such perturbations are unchanged.
\end{proof}

\begin{theorem}[Function-Space Homogeneity in Generalized Linear Models]
We specify a generalized linear model, $E[Y] = g^{-1}(\Phi \beta)$, where $\Phi \in \mathbb{R}^{n \times k}$ is a feature matrix of $n$ observations and $k$ features and $\beta \in \mathbb{R}^k$ are the model parameters. In the overparameterized setting with isotropic prior on $\beta$, there exists a $k-n$ dimensional subspace in which parameters can be perturbed without changing the training predictions in function-space or the value of the Hessian.
\end{theorem}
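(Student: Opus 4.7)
\medskip

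\noindent\textbf{Proof proposal.} The plan is to mimic the linear case by exploiting the fact that, for a GLM, the entire likelihood (and hence the Hessian of the log-likelihood) depends on $\beta$ only through the linear predictor $\eta = \Phi\beta$. The degenerate subspace will then be the null space of $\Phi$, which has dimension at least $k-n$ whenever $\Phi \in \mathbb{R}^{n\times k}$ with $k>n$.

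First I would write the negative log-posterior in the canonical GLM form
\begin{equation*}
\mathcal{L}(\beta) \;=\; -\sum_{i=1}^n \log p\!\bigl(y_i \mid g^{-1}((\Phi\beta)_i)\bigr) \;+\; \tfrac{1}{2\alpha^2}\|\beta\|_2^2 \;+\; \text{const},
\end{equation*}
and observe that the data-dependent term is a function only of $\Phi\beta$. Let $u \in \ker(\Phi)$; since $\Phi(\beta + u) = \Phi\beta$, the predictive mean $g^{-1}(\Phi\beta)$ is unchanged, giving training-prediction homogeneity in function space. By rank–nullity, $\dim \ker(\Phi) \ge k - n$, which supplies the required subspace dimensionality.

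Next I would verify invariance of the Hessian. A direct differentiation of $\mathcal{L}$ via the chain rule yields
\begin{equation*}
\mathcal{H}_\beta(\beta) \;=\; \Phi^\top W(\Phi\beta)\,\Phi \;+\; \alpha^{-2} I_k,
\end{equation*}
where $W(\eta)$ is the diagonal matrix of second derivatives of the per-observation negative log-likelihood with respect to the linear predictor, evaluated at $\eta = \Phi\beta$. Since $W$ depends on $\beta$ only through $\Phi\beta$, and $\Phi(\beta+u) = \Phi\beta$ for $u \in \ker(\Phi)$, we obtain $\mathcal{H}_\beta(\beta+u) = \mathcal{H}_\beta(\beta)$. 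Moreover any $u \in \ker(\Phi)$ satisfies $\mathcal{H}_\beta u = \alpha^{-2} u$, so the degenerate subspace is precisely spanned by $k-n$ eigenvectors of the minimum eigenvalue $\alpha^{-2}$, mirroring the linear-model conclusion.

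The main obstacle I anticipate is a small technical one rather than a conceptual one: making sure the expression $\mathcal{H}_\beta = \Phi^\top W(\Phi\beta)\Phi + \alpha^{-2}I_k$ is valid in full generality (the observed-information form matches the Fisher-information form only under canonical link, but for general link one picks up extra terms that still depend on $\beta$ solely through $\Phi\beta$, so the conclusion holds). I would briefly note this and observe that in every case the data-dependent piece factors through $\Phi\beta$, after which both function-space homogeneity on training inputs and invariance of the Hessian follow immediately from $u \in \ker(\Phi)$.
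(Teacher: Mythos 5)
Your proposal is correct and follows essentially the same route as the paper: both arguments rest on the observation that the likelihood and its Hessian depend on $\beta$ only through the linear predictor $\Phi\beta$, so any $u \in \ker(\Phi)$ (a subspace of dimension at least $k-n$ by rank–nullity) leaves both the training predictions $g^{-1}(\Phi\beta)$ and the Hessian unchanged. Your explicit form $\mathcal{H}_\beta = \Phi^\top W(\Phi\beta)\Phi + \alpha^{-2}I_k$, the eigenvalue-$\alpha^{-2}$ remark, and the canonical-link caveat are welcome refinements of what the paper states more abstractly as $\mathcal{H}_\beta = f(\Phi\beta,\Phi)$, but they do not change the substance of the argument.
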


\begin{proof}
The Hessian of the log-likelihood for GLMs can be written as a function of the feature map, $\Phi$, and the product of the feature map and the parameters, $\Phi \beta$, i.e. $\beta$ only appears multiplied by the feature map \citep{nelder1972generalized}. We can then write $\mathcal{H}_\beta = f(\Phi\beta, \Phi)$ Additionally predictions are generated by $y = g^{-1}\left(\Phi \beta\right)$. Since $\Phi \in \mathbb{R}^{n\times p}$ with $n < p$ there is a nullspace of $\Phi$ with dimension at least $n-p$.
Thus for any $u \in \textrm{null}(\Phi)$ we have $g^{-1}\left(\Phi(\beta + u)\right) = g^{-1}\left(\Phi \beta\right) = y$ and $f(\Phi(\beta + u), \Phi) = f(\Phi\beta, \Phi) = \mathcal{H}_\beta$, which shows that the training predictions and the Hessian remain unchanged.
\end{proof}

\section{Perturbations on CIFAR-$10$}
\label{app: cifar}

To demonstrate that the results presented in Section \ref{sec: loss-surfaces} apply to larger architectures similar to those seen in practice we train a convolutional classifier provided by Pytorch
on the CIFAR-$10$ dataset.\footnote{The architecture is provided here: \url{https://pytorch.org/tutorials/beginner/blitz/cifar10_tutorial.html}} The network has approximately $62000$ parameters and is trained on $50000$ images.

Figure \ref{fig: cifar-loss} shows the presence of degenerate directions in parameter space. We compute the top $200$ eigenvectors of the Hessian of the loss and consider perturbations in the directions of the top $2$ eigenvectors, as well as in all parameter directions \emph{except} the top $200$ eigenvectors of the Hessian. We see that even for larger networks and more complex datasets degenerate directions in parameter space are still present and comprise most possible directions.

\begin{figure}
    \centering
    \includegraphics[width=\linewidth]{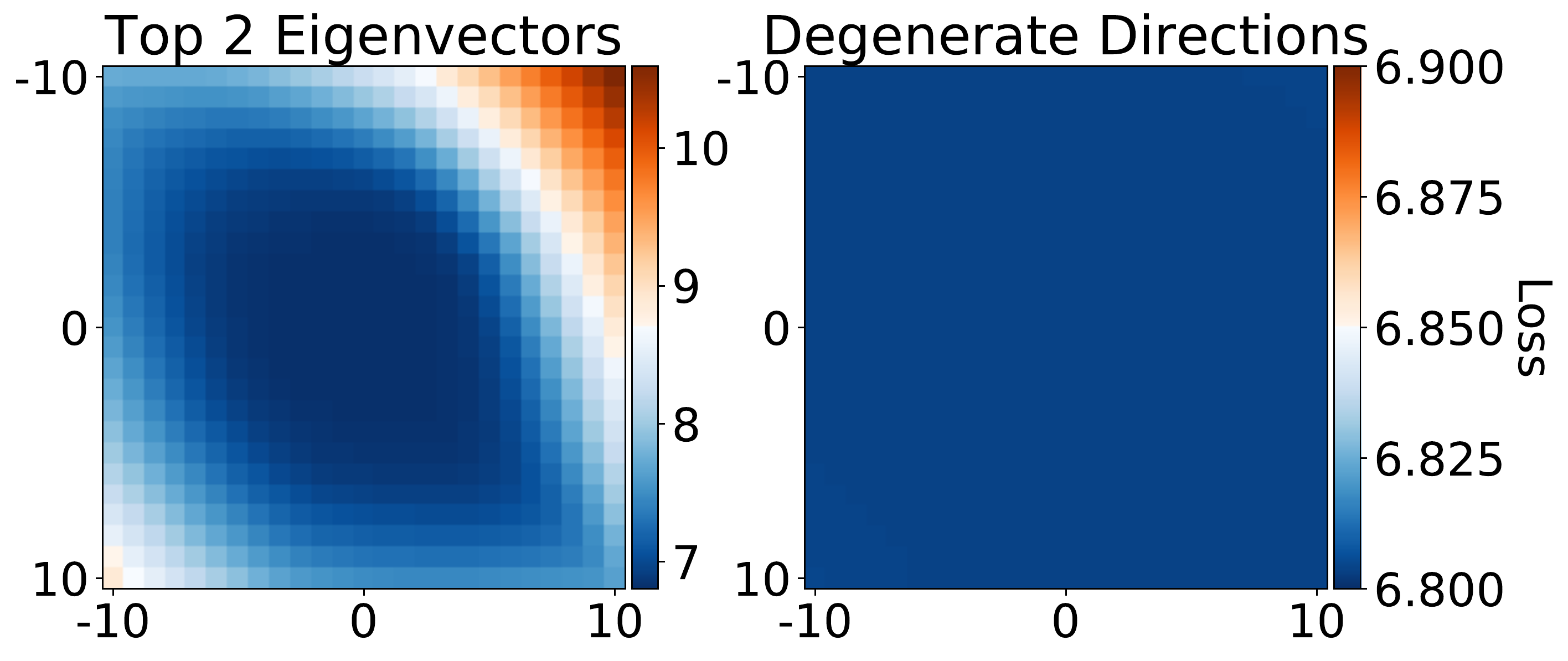}
    \caption{\textbf{Left:} A visualization of the log-loss surface taken in the direction of the top two eigenvectors of the Hessian of the loss. \textbf{Right:} A visualization of a random projection of the log-loss surface in all parameter directions \emph{except} the top $200$ eigenvectors of the Hessian. We can see that in nearly all directions the loss is constant even as we move far from the optimal parameters. \textbf{Note} the scale difference, even as we increase the resolution of the degenerate loss surface we still see no structure.}
    \label{fig: cifar-loss}
\end{figure}

Figure \ref{fig: cifar-homog} demonstrates that the degenerate directions in parameter space lead to models that are homogeneous in function space on both training and testing data. As increasingly large perturbations are made in degenerate parameter directions, we still classify more than $99\%$ of both training and testing points the same as the unperturbed classifier.

\begin{figure}
    \centering
    \includegraphics[width=\linewidth]{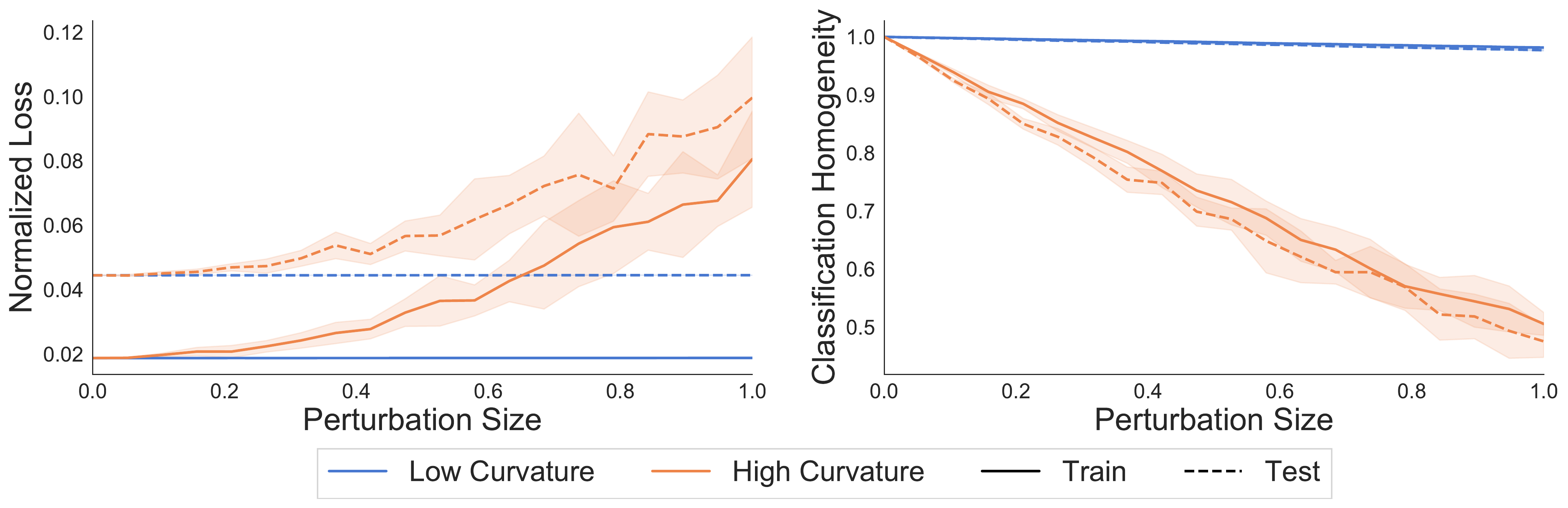}
    \caption{\textbf{Left:} Loss, normalized by dataset size, on both train and test sets as perturbations are made in high curvature directions and degenerate directions. \textbf{Right:} Classification homogeneity, the fraction of data points classified the same as the unperturbed model, as perturbations are made in both high curvature and degenerate directions.}
    \label{fig: cifar-homog}
\end{figure}

\section{More Classifiers}
\label{app: classifiers}
Figures \ref{fig: low-curve-classifiers} and \ref{fig: high-curve-classifiers} provide more examples of perturbations in high and low curvature directions and the effect of the scale of the perturbation on function-space predictions for the two-spirals experiment in Section \ref{sec: loss-surfaces}.
\begin{figure}[H]
    \centering
    \includegraphics[width=\linewidth]{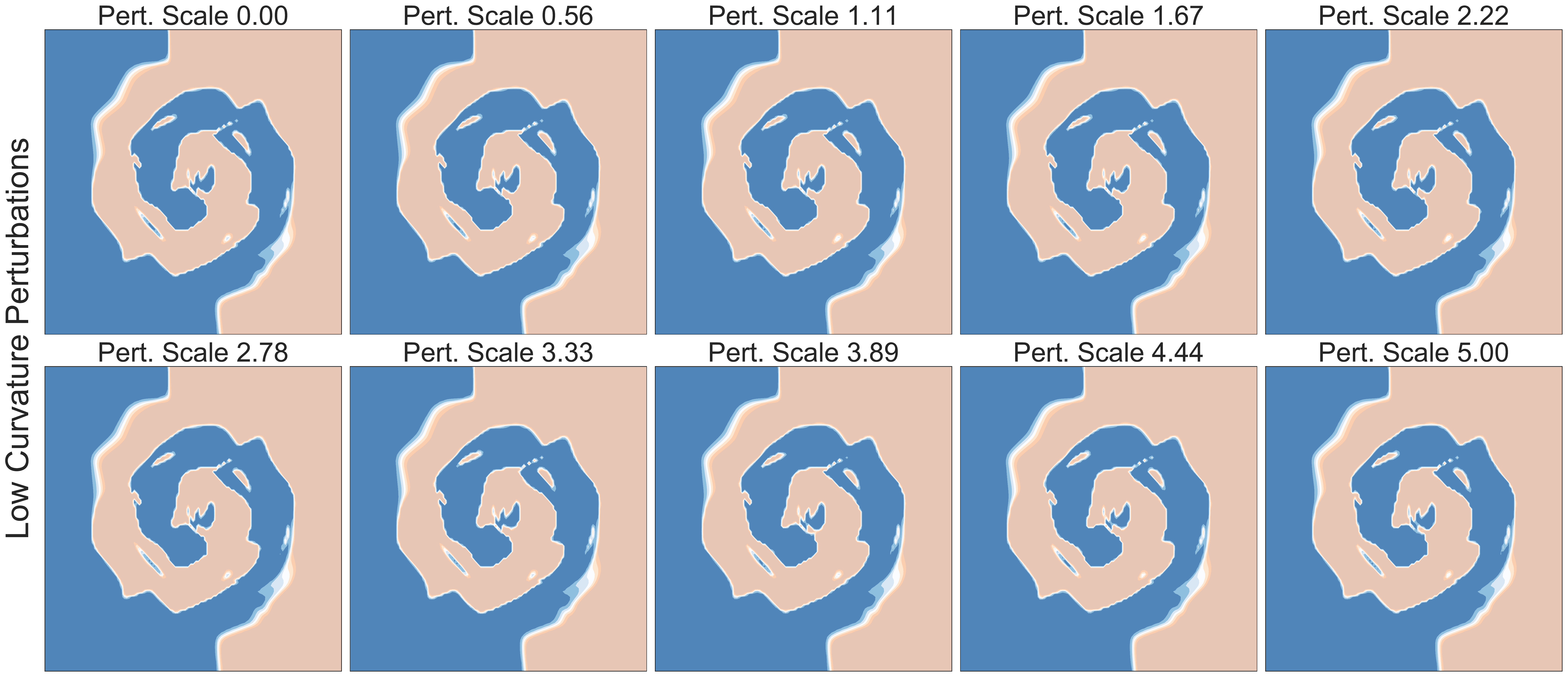}
    \caption{Classifiers as the parameters are shifted in random directions within the span of the bottom 1500 eigenvectors of the Hessian of the loss. Scales of the perturbation range from 0 (upper left) to 2 (lower right).}
    \label{fig: low-curve-classifiers}
\end{figure}

\begin{figure}[H]
    \centering
    \includegraphics[width=\linewidth]{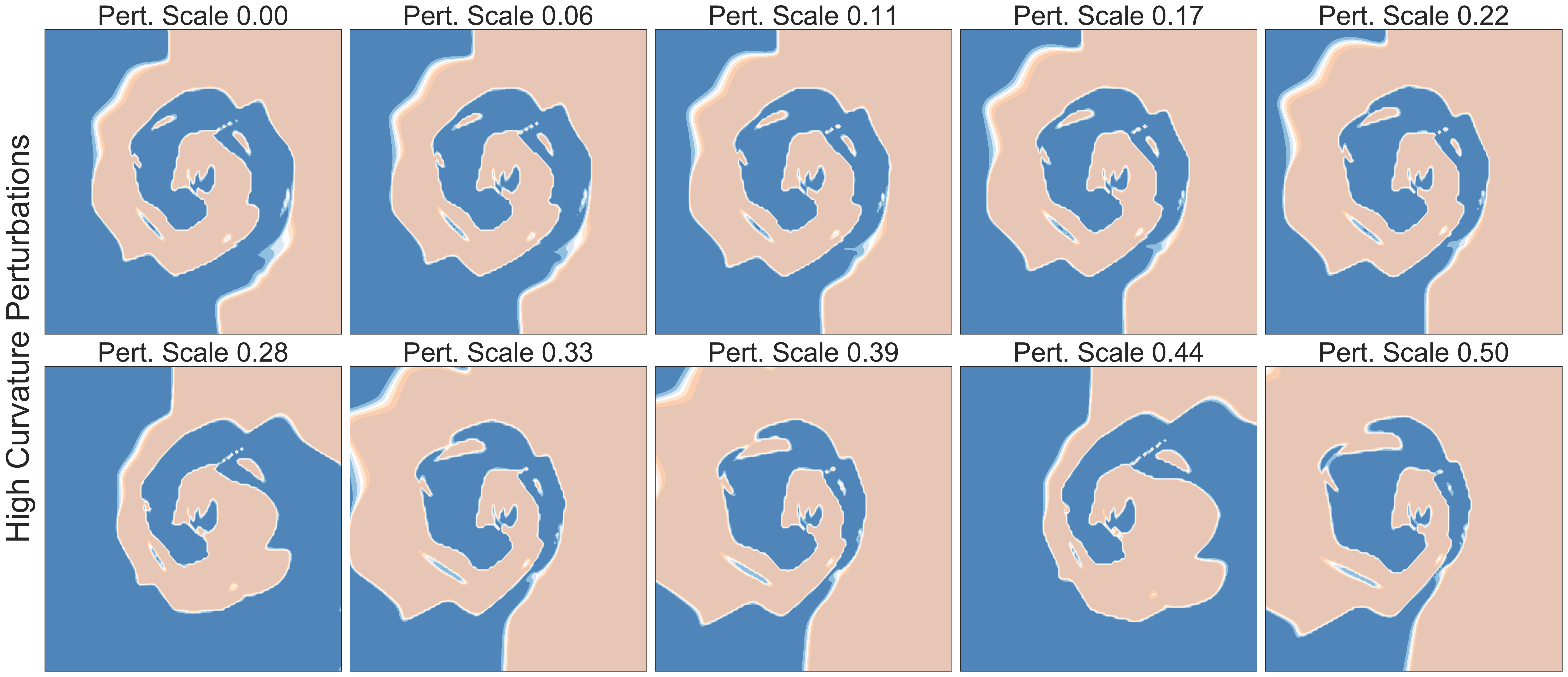}
    \caption{Classifiers as the parameters are shifted in random directions within the span of the top 3 eigenvectors of the Hessian of the loss. Scales of the perturbation range from 0 (upper left) to 0.5 (lower right).}
    \label{fig: high-curve-classifiers}
\end{figure}

\section{Deep Neural Network Training and Eigenvalue Computation}\label{app:dnn_exp_details}

\paragraph{Training Details}
For the double descent experiments in Figures \ref{fig:nn-double-descent-intro} and \ref{fig:width_depth_exp} we use neural network architectures from the following sources:
\begin{itemize}
    \item CNNs from \url{https://gitlab.com/harvard-machine-learning/double-descent/-/blob/master/models/mcnn.py} but also include an option to vary the depth,
    \item ResNet18 from \url{https://gitlab.com/harvard-machine-learning/double-descent/-/blob/master/models/resnet18k.py},
    \item PreResNet-56 from \url{https://github.com/bearpaw/pytorch-classification/blob/master/models/cifar/preresnet.py}.
\end{itemize}

Specifically, we train with SGD with a learning rate of $10^{-2}$, momentum of $0.9,$ weight decay of $10^{-4}$ (thus corresponding approximately to a Gaussian prior of with variance $1000$) for $200$ epochs with a batch size of $128.$
The learning rate decays to $10^{-4}$ following the piecewise constant learning rate schedule in \citet{izmailov2018averaging} and \citet{maddox2019simple}, beginning to decay at epoch $100$.
We use random cropping and flipping for data augmentation --- turning off augmentations to compute eigenvalues of the Hessian.

\paragraph{Lanczos Calculations}
We use GPU enabled Lanczos as implemented in \citet{gardner2018gpytorch} to compute the eigenvalues approximately, running $100$ steps to compute approximately $100$ of the top eigenvalues. 
We note that our estimates of the effective dimensionality are somewhat biased from not including all of the small eigenvalues.
However, these small eigenvalues will contribute negligibly and Lanczos will converge to the true eigenvalues if we ran $k$ steps where $k$ is the rank of the Hessian.

\section{Double Descent and Effective Dimensionality: Further Experiments}
\begin{figure*}[!t]
\centering
\includegraphics[width=\linewidth]{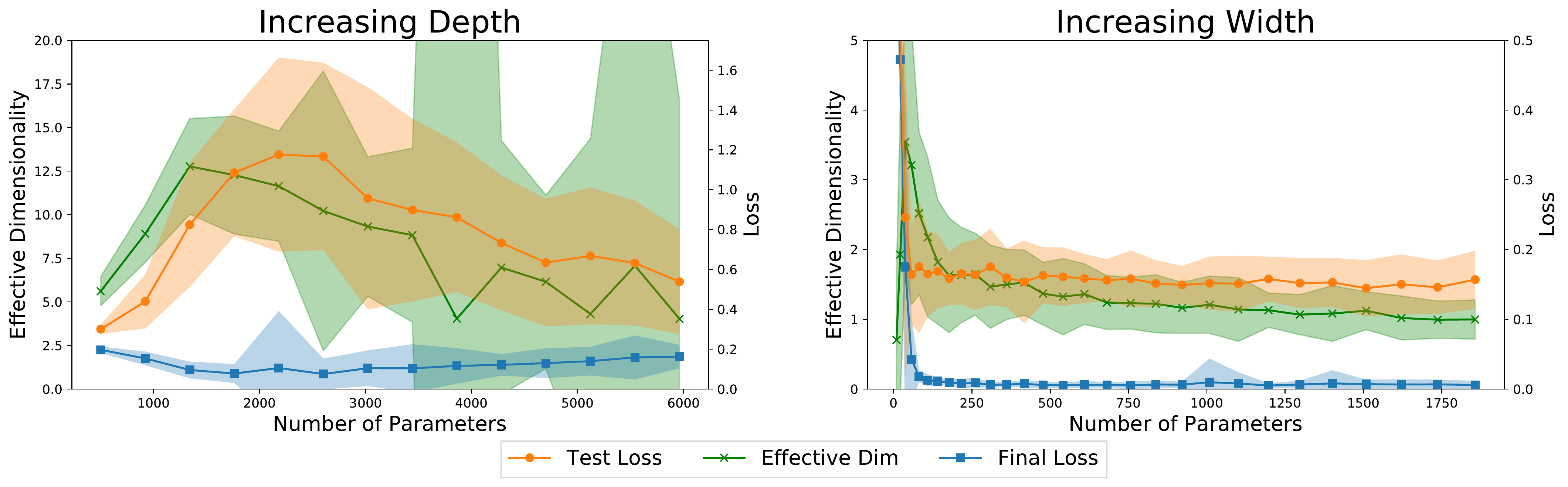}
\caption{\textbf{Left:} Increasing depth on the two spirals problem. Clearly seen is a double descent curve where the test loss first increases before decreasing as a function of depth. The effective dimensionality follows the same trend. \textbf{Right:} Increasing width on the two spirals problem. Here, increased width produces constant test performance after the training loss reaches zero, and the effective dimensionality stays mostly constant. Shading represents two standard deviations as calculated by $25$ random generations of the spirals data.}
\label{fig:width_depth_exp_two_spirals}
\end{figure*}

Finally, we consider several further experiments on the two spirals problem to test the effects of increasing depth and width to serve as a sanity check for our results on both ResNets and CNNs. 
In Figure \ref{fig:width_depth_exp_two_spirals}, we fix the number of data points to be $3000$, and vary the depth of the neural network (20 hidden units at each layer, ELU activations) using between one and $15$ hidden units, training for $4000$ steps as before.
Here, we run each experiment with $25$ repetitions and compute all of the eigenvalues of the Hessian at convergence (the largest model contains $6000$ parameters).
In the left panel, we see a pronounced double descent curve with respect to both effective dimensionality and test loss as we vary depth.
In the right panel, we use the same data points, but use three hidden layer networks, varying the width of each layer between one and $30$ units per layer.
Here, we see only a monotonic decrease in both test error and effective dimensionality with increasing width not helping that much in terms of test error --- the effective dimensionality is highest for the models with smallest size and slowly decreases as the width is increased.
These results serve as a sanity check on our large-scale Lanczos results in the main text.

In Figure \ref{fig:gen_vs_eff_dim}, we plot the effective dimensionality against the test error for the linear model example in Section \ref{sec: double-descent}.
A clear linear-looking trend is observed, which corresponds to the models that have nearly zero training error. 
The bend near the origin is explained by models that do not have enough capacity to fit --- therefore, their effective dimensionality is very small.
We observe a similar trend for ResNets and CNNs.
\begin{figure*}[h]
    \centering
    \includegraphics[width=0.5\linewidth]{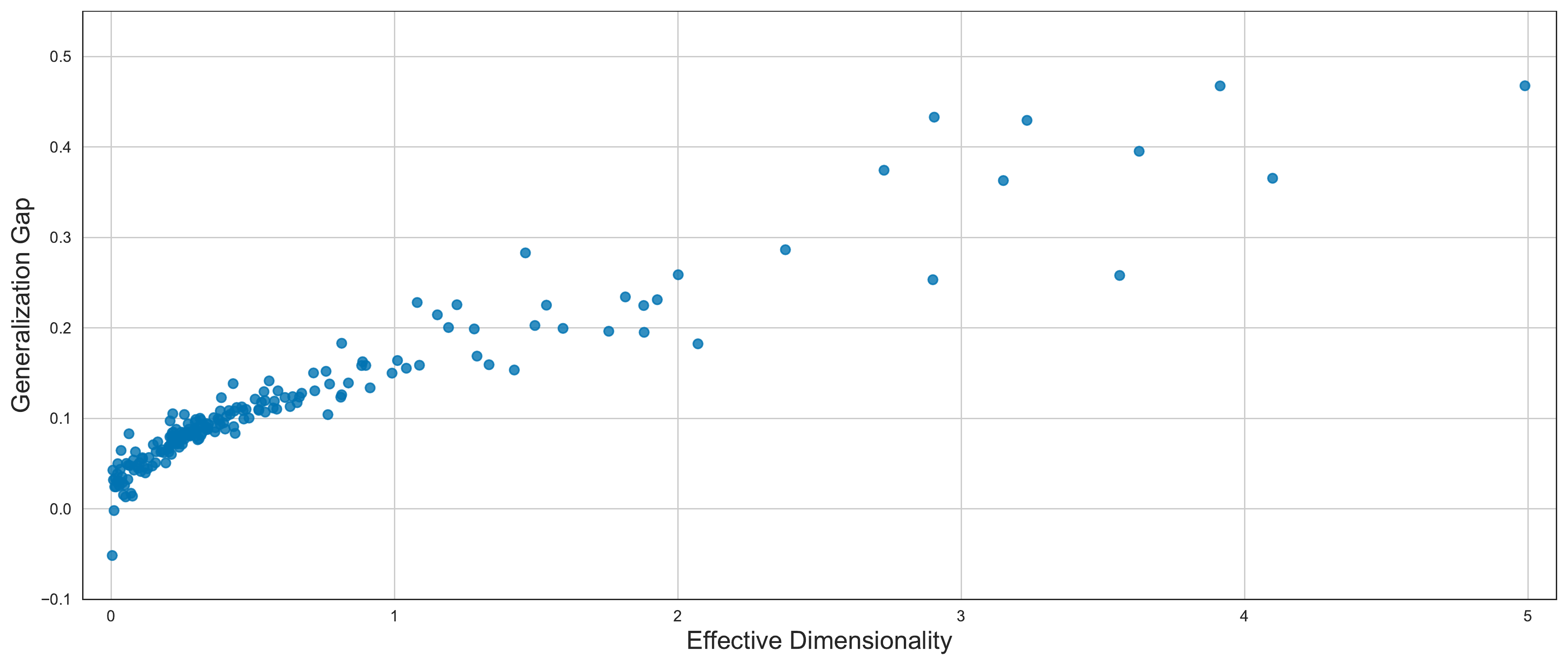}
    \caption{Effective dimensionality plotted against generalization gap (test error $-$ train error) for the linear model of Section \ref{sec: double-descent}. Note that all but the very smallest models with effective dimensionality track nearly linearly with generalization error.}
    \label{fig:gen_vs_eff_dim}
\end{figure*}

In Figure \ref{fig:dd_test_error}, we plot the effective dimensionality against the negative log likelihood for the ResNet18s trained on CIFAR-100 with $20\%$ data corruption. 
We show the sample Pearson correlation with respect to test loss, error, and the generalization gap for the same dataset of effective dimensionality along with PAC-Bayes, magnitude aware PAC-Bayes, path norm, and the logarithm of the path norm in Table \ref{tab:gen-correlation_corruption}.

Finally, in Figure \ref{fig: pacbayes_cifar}, we plot the PAC-Bayes and magnitude aware PAC-Bayes bounds on their raw scale for ResNet18s trained on CIFAR-100 and on the $20\%$ corrupted CIFAR-100 dataset. 
The trends are similar for the bounds across the two datasets, with the PAC-Bayes bound generally increasing as width increases. 
The magnitude aware PAC-Bayes bounds also act similarly, decreasing as width increases.

\begin{table}
\centering
\begin{tabular}{ c|c|c|c|} 
  & Test loss & Test Error & Gen. Gap\\ 
  \hline
  $N_{eff}$(Hessian) & $0.8772$ & $0.87608$ &  $0.8578$\\
  \hline
   PAC-Bayes & $0.6644$ & $0.6424$ & $0.7758$\\ 
 \hline
 Mag. PAC-Bayes & $0.6933$ & $0.6728$ & $0.7956$ \\ 
 \hline
 Path-Norm & $0.6722$ & $0.6478$ & $0.7937$ \\ 
 \hline
 Log Path-Norm & $\mathbf{0.9585}$ & $\mathbf{0.9492}$ & $\mathbf{0.9836}$ \\ 
 \hline

\end{tabular}
\caption{Sample Pearson correlation between generalization measures and the test loss, test error, and generalization gap for ResNet$18$s of varying width trained on CIFAR-$100$ with $20\%$ corruption that achieve a training loss below $0.1$. Among these models effective dimensionality and path-norm are most correlated with test loss, test error, and the generalization gap.}
\label{tab:gen-correlation_corruption}
\end{table}

\begin{figure*}
    \centering
    \includegraphics[width=0.8\columnwidth]{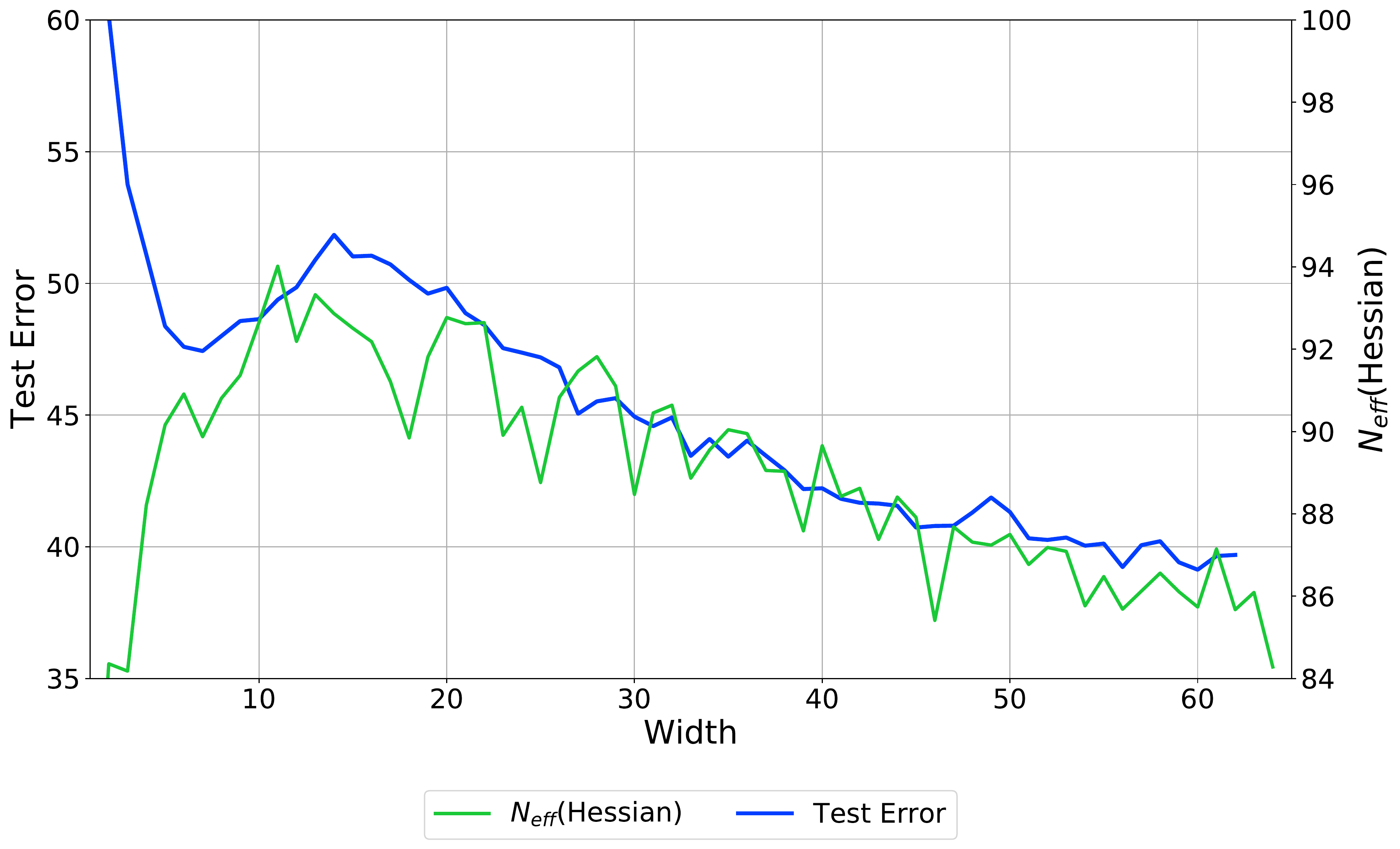}
    \caption{Double descent with respect to test error as demonstrated on ResNet18 on CIFAR-100 with $20\%$ corruption. The effective dimensionality again tracks the double descent curve, this time present in the test \emph{error} rather than test \emph{loss}.}
    \label{fig:dd_test_error}
\end{figure*}

\begin{figure*}
    \centering
    \includegraphics[width=0.8\linewidth]{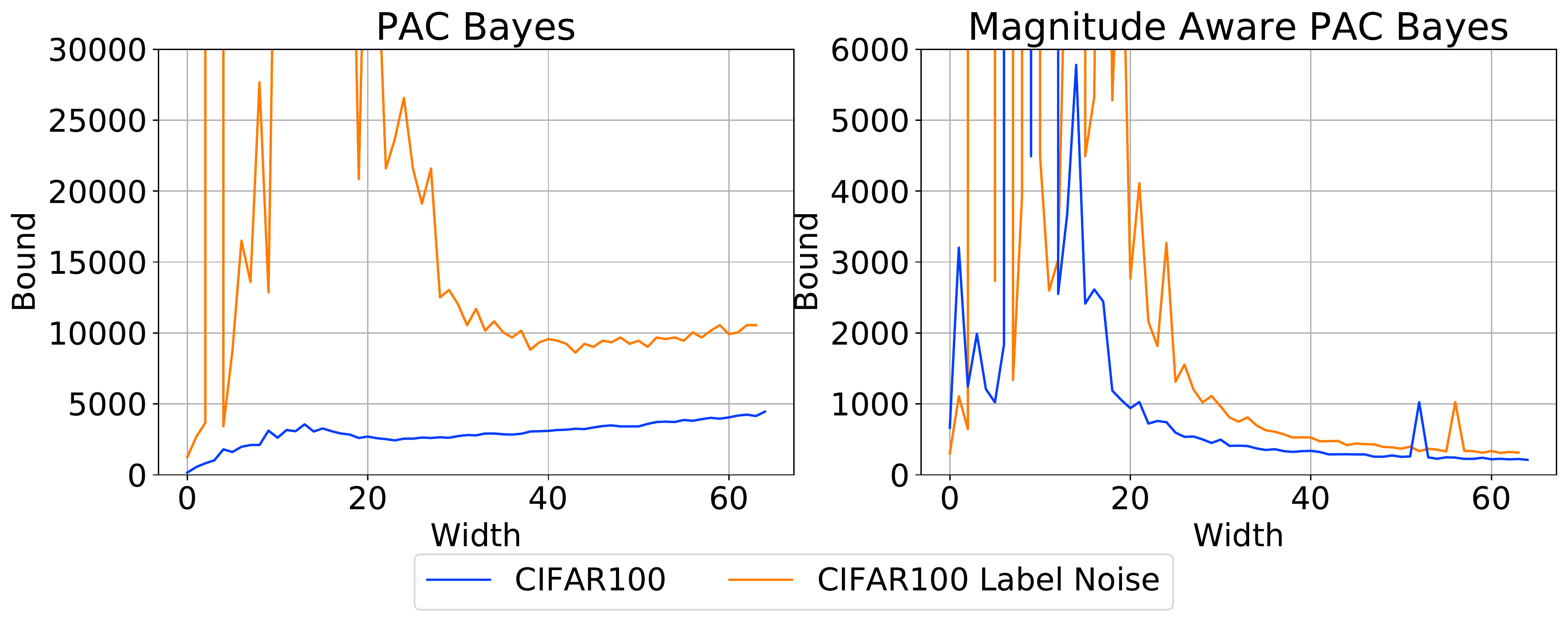}
    \caption{PAC-Bayes and magnitude aware PAC-Bayes bounds displayed on their raw scale for both CIFAR-100 and the $20\%$ corrupted CIFAR-100. All have a significant spike near a width of $20$ which is when training loss first reaches zero. The standard PAC-Bayes bounds increase as width increases, while the magnitude aware ones seem to decrease towards zero.}
    \label{fig: pacbayes_cifar}
\end{figure*}

\end{document}